\def \OURS {PGED}
\newtheorem{theorem}{Theorem}
\newtheorem{lemma}{Lemma}
\newtheorem{proof}{Proof}[section]
\newtheorem{proposition}{Proposition}
\newtheorem{remark}{Remark}
\newtheorem{assumption}{Assumption}
\def\ra{\rightarrow}
\def\MAS{\mathrm{MAS}}
\def\wg{\widehat{G}}
\newcommand{\mc}[1]{\mathcal{#1}}
\newcommand{\abs}[1]{\left|#1\right|}
\newcommand{\Sp}[1]{\left(#1\right)}
\title{Towards Acyclic Preference Evaluation of Language Models\\via Multiple Evaluators}
\author{
    Zhengyu Hu\textsuperscript{\rm 1},
    Jieyu Zhang\textsuperscript{\rm 2}, 
    Zhihan Xiong\textsuperscript{\rm 2}, 
    Alexander Ratner\textsuperscript{\rm 2}, 
    Kaize Ding\textsuperscript{\rm 3},
    Ranjay Krishna\textsuperscript{\rm 2}
}
\title{My Publication Title --- Single Author}
\author {
    Author Name
}
\title{My Publication Title --- Multiple Authors}
\author {
    % Authors
    First Author Name\textsuperscript{\rm 1,\rm 2},
    Second Author Name\textsuperscript{\rm 2},
    Third Author Name\textsuperscript{\rm 1}
}
\begin{document}

\maketitle
\begin{abstract}

Despite the remarkable success of Large Language Models (LLMs), evaluating their outputs' quality regarding \emph{preference} remains a critical challenge.
While existing works usually leverage a strong LLM as the judge for comparing LLMs' response pairwisely, such a single-evaluator approach is vulnerable to \emph{cyclic preference}
, i.e.,  output A is better than B, B than C, but C is better than A,
causing contradictory evaluation results.
To address this, we introduce \OURS{} 
(Preference Graph Ensemble and Denoising), a novel approach that leverages multiple model-based evaluators to construct preference graphs, and then ensembles and denoises these graphs for acyclic, non-contradictory evaluation results. 
We provide theoretical guarantees for our framework, demonstrating its efficacy in recovering the ground truth preference structure. 
Extensive experiments on ten benchmarks demonstrate \OURS{}'s superiority in three applications: 1) model ranking for evaluation, 2) response selection for test-time scaling, and 3) data selection for model fine-tuning. Notably, \OURS{} combines small LLM evaluators (e.g., Llama3-8B, Mistral-7B, Qwen2-7B) to outperform strong ones (e.g., Qwen2-72B), showcasing its effectiveness in enhancing evaluation reliability and improving model performance.
\end{abstract}

\section{Introduction}
\label{sec:intro}

Large Language Models (LLMs) have rapidly advanced various areas of artificial intelligence, particularly natural language processing and decision-making~\citep{Wu2023AutoGenEN,li2023camel}. As LLMs become increasingly capable, effective evaluation becomes critical~\citep{siska2024examining,boyeau2024autoeval,chatzi2024prediction}. Among evaluation methods, preference evaluation plays a critical role in both evaluating and aligning LLMs~\citep{rafailov2024direct, yuan2024self, dubois2024alpacafarm}. A common practice is to rely on a strong LLM (e.g., GPT-4~\citep{achiam2023gpt}) as the judge to conduct pairwise comparisons~\citep{alpaca_eval,chen2023alpagasus}.

\begin{figure*}[!t]
\centering
\includegraphics[width=0.8\linewidth]{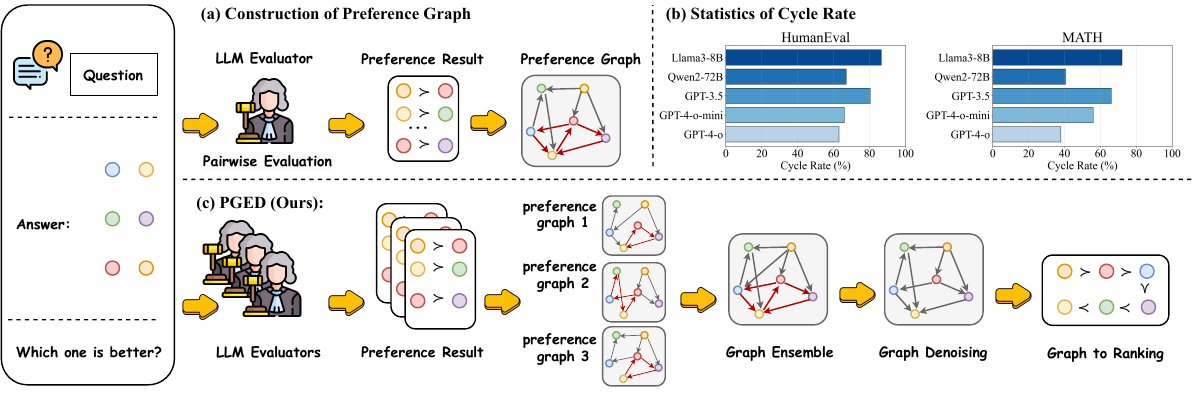}
\caption{
(a) A preference graph exhibiting cyclic inconsistencies (e.g., A $\succ$ B $\succ$ C $\succ$ A), which violate transitivity.
(b) Empirical results showing that even advanced LLMs (e.g., GPT-4-o) exhibit significant noise in preference judgments, leading to inconsistent evaluations.
(c) Overview of our proposed framework, \OURS{}, which ensembles multiple preference evaluators and applies denoising to recover a directed acyclic graph.
}
\label{fig:circle_cnt}
\end{figure*}

However, such model-based evaluator setups often lead to \textit{cyclic preferences}, where inconsistent rankings emerge, for instance, preferring A over B, B over C, yet C over A~\citep{naresh2024curatron,zhang2024contrasolver}. 
These cyclic patterns violate the transitivity assumption of preferences established in prior work~\citep{ouyang2022training, song2024preference, hou2024large, liu2024aligning}, thereby undermining the reliability of evaluation results.
We model this \emph{conflicting preference} using a \textit{preference graph}, where nodes represent responses and directed edges indicate  pairwise preferences. 
Cycles in such graphs (e.g., A $\succ$ B $\succ$ C $\succ$ A) reflect evaluation noise, as shown in Figure~\ref{fig:circle_cnt}(a). Ideally, a  preference graph should be a directed acyclic graph (DAG) to maintain consistency.
Empirically, we evaluated 10 Llama3-70B~\citep{llama3modelcard} responses on HumanEval~\citep{chen2021evaluating} and MATH~\citep{hendrycks2021measuring}, using GPT-4-o, GPT-4-o-mini, GPT-3.5~\citep{achiam2023gpt}, Qwen2-72B~\citep{yang2024qwen2}, and Llama3-8B as judges. 
Even with GPT-4-o, 64\% of preference graphs in HumanEval and 38\% in MATH contained cycles, (Figure~\ref{fig:circle_cnt}(b)), demonstrating the persistent noise in LLM-based preference evaluations and motivating the need for a more robust approach.

To address this, we propose a novel framework, \OURS{} (\textbf{P}reference \textbf{G}raph \textbf{E}nsemble and \textbf{D}enoising). 
Our method involves two key steps: 
(1) ensembling multiple preference evaluators to mitigate noise introduced by individual evaluators and 
(2) applying a denoising process to the resulting preference graph. By aggregating evaluations from multiple individual evaluators, we "average out" the noise and biases, resulting in a more robust approximation of the true preference structure. The denoising step further refines this aggregated graph by removing inconsistencies, ensuring the final preference graph is more reliable for downstream tasks. 
The overall process of \OURS{} is illustrated in Figure~\ref{fig:circle_cnt} (c).
We provide a theoretical analysis demonstrating the soundness of \OURS{}, showing that by treating each individual preference graph as a random perturbation of a ground truth DAG, our ensemble and denoising framework can recover the ground truth DAG with high probability.
To validate the practical efficacy of \OURS{},  we conduct ten tasks ranging from  response selection, model ranking to model alignment tasks, utilizing ten widely recognized benchmark datasets.
In these experiments, \OURS{} consistently outperformed baseline methods. 
Additionally, \OURS{} demonstrated substantial gains in scenarios where combining preference graphs from small evaluators surpassed the performance of even stronger individual evaluators. 
Specifically, when using using Llama3-8B, Mistral-7B, and Qwen2-7B as evaluators, \OURS{} exceeded the performance of using the  Qwen2-72B  in response selection task.
These results highlight \OURS{}'s ability to mitigate preference noise, improve consistency, and enhance model performance across diverse evaluation settings.
Our contributions are summarized as follows:
\begin{itemize}
    \item We propose \OURS{}, a novel framework that ensembles multiple preference evaluators and denoises the resulting preference graphs to produce acyclic and reliable evaluation outputs.
    \item We provide theoretical guarantees that \OURS{} can recover the ground-truth preference DAG under a reasonable noise model, thereby offering provable consistency.
    \item We conduct extensive experiments on ten benchmark datasets and three key tasks, model ranking, response selection, and data selection for fine-tuning, demonstrating that \OURS{} significantly improves evaluation robustness. Notably, \OURS{} using small models outperforms even strong single evaluators like Qwen2-72B.
\end{itemize}

\section{Related Work}

\paragraph{Preference evaluation.}

Classical preference aggregation includes Elo, which iteratively updates item ratings from pairwise outcomes~\citep{Jiang2023LLMBlenderEL}; Bradley–Terry (BT/BTL) models, which assign latent utilities to explain win probabilities~\citep{Bradley1952RankAO,Lu2011LearningMM}; and MergeSort-style procedures, which sort items via pairwise comparisons with \(O(n\log n)\) queries~\citep{Shyamasundar1996IntroductionTA}.
LLM-based preference evaluation increasingly relies on a strong model (e.g., GPT-4) as a zero-shot, reference-free judge of weaker models’ outputs~\citep{Shen2023LooseLS,dubois2024alpacafarm}. 
PRD combines peer ranking over pairwise preferences with peer discussion between LLMs to reach consensus~\citep{Li2023PRDPR}. 
ChatEval forms a multi-agent referee team that debates and evaluates responses for more reliable assessments~\citep{Chan2023ChatEvalTB}.
However, LLM judges can induce contradictory preferences and cycles when aggregating across outputs, yielding noisy and inconsistent preference graphs~\citep{naresh2024curatron,zhang2024contrasolver}.
SPA~\citep{kadekodiselective} produces partial-order rankings that abstain on disputed pairs to balance comparability and disagreement, while ContraSolver~\citep{zhang2024contrasolver} builds a preference graph with self-annotations and flags edges likely responsible for contradictions.
In contrast, our approach ensembles multiple evaluator-induced graphs and denoises them via a weighted feedback-arc-set objective to recover an acyclic structure suitable for robust ranking.

\paragraph{Weak supervision.}
The concept of weak  supervision originates from the need to leverage noisy or partial labels in machine learning tasks, enabling the development of more robust models from imperfect data~\citep{zhang2023leveraging}. 
In LLMs, weak-to-strong supervision aids AI alignment by allowing weaker models to improve strong ones, enhancing performance without extensive data and supporting scalable oversight~\citep{zheng2024weak,guo2024improving,tong2024optimizing}.
Similarly, in task-oriented LLMs, weak-to-strong learning improves LLM's ability by enabling strong models to refine their data autonomously, boosting perfoArmance without extensive high-quality input~\citep{yang2024weak}.
Through weak-to-strong supervision, LLM performance can be significantly improved by iteratively transforming low-quality labels into more reliable ones, leading to more effective model training and robust outputs~\citep{zakershahrak2024explanation,lang2024theoretical}.
Recent work also shows weak LLMs can rival human feedback quality, enabling scalable and cost-efficient alignment~\citep{tao2024your, li2025strong}.

\section[PGED]{Preference Graph Ensemble and Denoising}
\label{sec:basic_operations}

In this section, we first introduce key definitions and assumptions underlying our approach, including the formal definition of a preference graph and the assumption of preference transitivity (Section~\ref{sec:Preliminarily}). 
Building on these preliminaries, we then present our proposed framework, \OURS{}, which consists of three main steps: (1) \emph{Graph Ensemble} (Section~\ref{sec:graphensemble}), aggregating multiple evaluators' preference graphs into a single unified structure; (2) \emph{Graph Denoising} (Section~\ref{sec:graph_denoise}), removing cycles to ensure an acyclic, consistent preference structure; and (3) \emph{Graph-to-Ranking} (Section~\ref{sec:graphtorank}), converting the denoised graph into a reliable candidate ranking.
The overall process of \OURS{} is illustrated in Figure~\ref{fig:circle_cnt}(c).

\subsection{Preliminarily}
\label{sec:Preliminarily}
\paragraph{Transitivity of Preferences.}
In  preference evaluation, we assume that the \emph{ground-truth} preference relation is transitive. 
That is, for any distinct $u,v,w \in V$, if $u \succ v$ and $v \succ w$, then it must hold that $u \succ w$:
\begin{equation}
(u \succ v) \wedge (v \succ w) \Longrightarrow (u \succ w).
\end{equation}
This transitivity assumption ensures that pairwise comparisons can be consistently embedded into a global ranking.

\paragraph{Preference Graph.} A preference graph is a directed graph \( G_P = (V, A, w) \), where \( V = \{v_1, v_2, \ldots, v_n\} \) represents \( n \) candidates, \( A \subseteq V \times V \) is a set of directed arcs indicating pairwise preferences, and \( w: A \rightarrow \mathbb{R}^+ \) assigns weights to arcs, representing preference strength.
For distinct \( u, v \in V \), an arc \( (u, v) \in A \) exists if \( w(u, v) > 0 \).

\subsection{Graph Ensemble}
\label{sec:graphensemble}
Given \(k\) weighted preference graphs \(G_i=(V,A_i,w_i)\) on a common vertex set \(V\), define
\begin{equation}
\begin{aligned}
A_E \quad &\triangleq \Big\{(u,v)\in V\times V:\ \sum_{i=1}^{k} w_i(u,v) > 0\Big\},\\
w_E(u,v) \quad &\triangleq \sum_{i=1}^{k} w_i(u,v),\quad (u,v)\in A_E,
\end{aligned}
\end{equation}
where  $k$ is the total number of preference sources and $w_i(u, v)$ is the preference result from the $i$-th source and \(w_i(u,v)=0\) if \((u,v)\notin A_i\). The ensemble graph is \(G_E=(V,A_E,w_E)\).

\subsection{Graph Denoising}
\label{sec:graph_denoise}
To ensure consistency in the aggregated preference graph, we apply a denoising step that transforms a potentially cyclic weighted digraph \(G=(V,A,w)\) into a directed acyclic graph (DAG). We cast this as the classical \emph{weighted feedback arc set} (WFAS) problem~\citep{gabow1995centroids}, which seeks a minimum-total-weight set of arcs whose removal renders the graph acyclic. Formally,
\begin{align}
\label{eq:wfas}
R^*(G) \;=\; \arg\min_{R \subseteq A} \sum_{(u,v) \in R} w(u,v) \\
\text{s.t.} \quad (V, A \setminus R)\ \text{is acyclic}. \nonumber
\end{align}

A convenient formulation uses vertex sequencing. Given an ordering \(s=\{v_1,\ldots,v_n\}\), the induced feedback arc set \(R(s)\) comprises all arcs that violate the order, i.e., \((v_j \to v_i)\) with \(j>i\). The denoising problem then reduces to finding \(s^*\) that minimizes the total weight of backward edges. Computing the optimal ordering is NP-hard~\citep{karp2010reducibility}, so we adopt an efficient greedy procedure tailored to weighted graphs.
Our algorithm iteratively builds a total order by removing structurally informative vertices. At each step, we first peel all \emph{sinks} (zero out-degree) and place them at the end of the order, then peel all \emph{sources} (zero in-degree) and place them at the beginning. When neither exists, we choose a vertex \(u\) maximizing the weighted degree difference
\begin{equation}
\delta(u) \;=\; d^+(u) - d^-(u),
\end{equation}
where \(d^+(u)=\sum_{(u,v)\in A} w(u,v)\) and \(d^-(u)=\sum_{(v,u)\in A} w(v,u)\). Repeating until all vertices are removed yields a total order; the backward arcs under this order form an approximate feedback arc set \(R(s)\). We then construct and \emph{return only} the denoised graph \(G'=(V, A \setminus R(s), w)\), which is a subgraph of the input and is typically sparse rather than fully connected. The full procedure is summarized in Algorithm~\ref{alg:pgD}.
Its theoretical guarantee can be seen in Appendix~\ref{sec:proof_wfas_convergence}.

\begin{algorithm}[t]
\caption{Preference Graph Denoising for \OURS{}}
\label{alg:pgD}
\begin{algorithmic}[1]
\State \textbf{Input:} Weighted digraph \(G = (V, A, w)\)
\State \textbf{Output:} Denoised graph \(G' = (V, A', w)\)
\State Let \(A_0 \leftarrow A\); initialize \(s_1 \gets [\ ],\ s_2 \gets [\ ]\)
\While{\(V \neq \emptyset\)}
    \While{\(\exists\) sink \(u\) in \(G\)}
        \State Prepend \(u\) to \(s_2\)
        \State Remove \(u\) and its incident arcs from \(G\)
    \EndWhile
    \While{\(\exists\) source \(u\) in \(G\)}
        \State Append \(u\) to \(s_1\)
        \State Remove \(u\) and its incident arcs from \(G\)
    \EndWhile
    \If{\(V = \emptyset\)} \textbf{break} \EndIf
    \State Select \(u = \arg\max_{v \in V} \big(d^+(v) - d^-(v)\big)\)
    \State Append \(u\) to \(s_1\)
    \State Remove \(u\) and its incident arcs from \(G\)
\EndWhile
\State Concatenate \(s = s_1 \Vert s_2\)
\State Compute \(R(s) = \{(v_j \to v_i) \in A_0 : j>i \text{ in } s\}\)
\State Set \(A' = A_0 \setminus R(s)\)
\State \Return \(G' = (V, A', w)\)
\end{algorithmic}
\end{algorithm}

\subsection{Graph to ranking}
\label{sec:graphtorank}

Given a DAG \( G = (V, A, w) \), we derive a ranking by computing the descendant count \( \text{desc}(v) \) for each vertex \( v \), defined as the number of vertices reachable from \( v \):
\begin{equation}
\text{desc}(v) = \left| \{ u \in V : v \rightarrow u \} \right|,
\end{equation}
where \( v \rightarrow u \) denotes a directed path. Vertices are ranked based on \( \text{desc}(v) \), with ties broken lexicographically:
\begin{equation}
v_1 \succ v_2 \succ \cdots \succ v_n.
\end{equation}
This ranking reflects both individual preferences and their relative strengths in the graph.

\section{Downstream Tasks}
\label{subsec: application}
We apply \OURS{} to three tasks: Response Selection (selecting the best response from LLM-generated candidates), Model Ranking (ranking models based on task performance), and Model Alignment (identifying the best instruction-response pairs for training).
Details on how preference graphs are constructed for each of the three settings can be found in Appendix~\ref{sec:preferenceConstruct}.

\subsection{Response Selection}
\label{subsec:response_selection}
For each question \(q \in Q\), a model \(\mathcal{M}\) generates \(n\) candidate answers \(C_q=\{ans_1,\ldots,ans_n\}\). A set of evaluators \(\mathcal{A}=\{a_1,\ldots,a_k\}\) provides pairwise preferences over \(C_q\). For each evaluator \(a\), we construct a weighted preference graph \(G_a^q=(V_q,A_a,w_a)\), where \(V_q=\{v_1,\ldots,v_n\}\) indexes candidates and \(w_a(u,v)\) accumulates the number of wins of \(u\) over \(v\) (ties are ignored). 
We then apply \OURS{}: (i) \emph{Graph Ensemble} aggregates \(\{G_a^q\}_{a\in\mathcal{A}}\) into a single graph \(G_E^q\) (Section~\ref{sec:graphensemble}); (ii) \emph{Graph Denoising} removes a minimum-weight set of feedback arcs to produce a DAG \(G'_q\) (Section~\ref{sec:graph_denoise}); and (iii) \emph{Graph-to-Ranking} converts \(G'_q\) into a total order \(\mathcal{R}_q=\{v_1 \succ \cdots \succ v_n\}\) (Section~\ref{sec:graphtorank}). The top-ranked candidate is selected as \(ans_q^*\). Repeating this for all \(q\in Q\) yields \(ans^*=\{ans_1^*,\ldots,ans_t^*\}\).

\subsection{Model Ranking}
\label{subsec:model_ranking}
Given a set of models \(M=\{\mathcal{M}_1,\ldots,\mathcal{M}_n\}\) and questions \(Q=\{q_1,\ldots,q_t\}\), our goal is to produce a global ranking of \(M\). For each question \(q\in Q\) and evaluator \(a\in\mathcal{A}\), we construct a weighted preference graph \(G_a^q=(V_q,A_a,w_a)\), where \(V_q=\{m_1,\ldots,m_n\}\) indexes model outputs \(\mathcal{M}_i(q)\) and \(w_a(u,v)\) accumulates wins of \(u\) over \(v\) (ties are ignored).
We then apply \OURS{} per question: (i) \emph{Graph Ensemble} aggregates \(\{G_a^q\}_{a\in\mathcal{A}}\) into \(G_E^q\) (Section~\ref{sec:graphensemble}); (ii) \emph{Graph Denoising} yields a DAG \(G'_q\) (Section~\ref{sec:graph_denoise}); and (iii) \emph{Graph-to-Ranking} returns a total order \(\mathcal{R}_q\) over \(V_q\) (Section~\ref{sec:graphtorank}). Finally, we aggregate \(\{\mathcal{R}_q: q\in Q\}\) into an overall model ranking \(\mathcal{R}^*\) using a ranking-ensemble procedure (Appendix~\ref{sec:rank_ensemble_method}).

\subsection{Model Alignment}
\label{subsec:model_alignment}
For each instruction \(x \in X\), let the candidate set be \(Y_x=\{y_1,\ldots,y_n\}\). Evaluators \(\mathcal{A}\) provide pairwise preferences over \(Y_x\). For each \(a\in\mathcal{A}\), we build a weighted preference graph \(G_a^x=(V_x,A_a,w_a)\), where \(V_x=\{v_1,\ldots,v_n\}\) indexes responses and \(w_a(u,v)\) aggregates wins of \(u\) over \(v\) (ties ignored
).
Applying \OURS{}, ensemble, denoising, and graph-to-ranking, produces a total order \(\mathcal{R}_x\); we select the top response \(y_x^*\) for instruction \(x\). Repeating this for all \(x\in X\) yields the aligned training set \(\{(x, y_x^*) : x \in X\}\).

\section{Theoretical Analysis}
\label{sec:theory}
In this section, we provide a theoretical foundation for our method, showing that by modeling preference graphs as random perturbations of a ground truth DAG, \OURS{} can reliably recover the true structure through graph ensemble and denoising with high probability, demonstrating its robustness in handling noisy evaluations.
Theoretically, we treat each of our preference graph as a random perturbation of some ground truth DAG $G=(V, A)$. Specifically, we consider a random graph generator $\mathcal{G}(G, \delta_1, \delta_2)$ with parameters $\delta_1, \delta_2\in [0, 1]$ such that $G_i=(V_i, A_i)\sim \mathcal{G}(G, \delta_1, \delta_2)$ satisfies $V_i=V$. 

Furthermore, for each $u, v\in V$ with $u\neq v$,

\begin{enumerate}[1)]
    \item If $(u\rightarrow v)\in A$, then 
    \begin{equation*}
    \begin{aligned}
        \mathbb{P}((u\rightarrow v)\in A_i) &= 1-\delta_1, \\
        \mathbb{P}((v\rightarrow u)\in A_i) &= \delta_1.
    \end{aligned}
    \end{equation*}

    \item If $(u\rightarrow v), (v\rightarrow u)\notin A$, then
    \begin{equation*}
    \begin{aligned}
        \mathbb{P}((u\rightarrow v), (v\rightarrow u)\notin A_i) &= 1-\delta_2, \\
        \mathbb{P}((u\rightarrow v)\in A_i) &= \frac{\delta_2}{2}, \\
        \mathbb{P}((v\rightarrow u)\in A_i) &= \frac{\delta_2}{2}.
    \end{aligned}
    \end{equation*}
\end{enumerate}

That is, each edge in $E$ has probability $\delta_1$ of being flipped and each pair of unconnected nodes has probability $\delta_2$ of being connected with a random direction. 

Now, given that $G_1, \dots, G_N\overset{\text{i.i.d.}}{\sim}\mathcal{G}(G, \delta_1, \delta_2)$, we will show that to some extent our combination of graph ensemble and graph denoising can indeed provably recover the ground truth DAG $G$. For simplicity, all edges in $G_1, \dots, G_N$ and $G$ are considered equal weighted. Meanwhile, we use $\text{MAS}(\cdot)$ to denote the graph obtained by denoising, which stands for the maximum acyclic subgraph (MAS). Then, we have the following theorem.

\begin{restatable}{theorem}{ensembleFirst}
    \label{theo:ensemble_first}
    Suppose $G_1, \dots, G_N\overset{\text{i.i.d.}}{\sim}\mathcal{G}(G, \delta_1, \delta_2)$ for some ground truth $G=(V, A)$. Let $\widehat{G}$ be the graph ensembled from $G_1, \dots, G_N$ by operations defined in Section \ref{sec:basic_operations}. Then, as long as $\delta_1=0.5-\epsilon$ for some $\epsilon>0$, we have
    \begin{equation*}
    \begin{split}
    \mathbb{P}\left( G\subseteq\text{MAS}(\widehat{G}) \right) 
    &\geq 1 - 2|A|\exp\Sp{-\frac{N\epsilon^2}{2}} \\
    &\quad - 2U\exp\Sp{-\frac{N\epsilon^2}{6U^2\delta_2 + 2U\epsilon}},
    \end{split}
    \end{equation*}
    where $G\subseteq\text{MAS}(\widehat{G})$ represents that $G$ is a subgraph of $\text{MAS}(\widehat{G})$ and $U=\frac{|V|(|V|-1)}{2} - |A|$ is the number of pairs of unconnected nodes in $G$.
\end{restatable}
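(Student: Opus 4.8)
The plan is to recast the ensemble $\widehat{G}$ as a single \emph{net-weighted} digraph: between every pair $\{u,v\}$ the ensemble retains one arc, oriented by the sign of the net score
\begin{equation*}
W(u,v) = \sum_{i=1}^{N}\bigl(\mathbf{1}[(u\to v)\in A_i] - \mathbf{1}[(v\to u)\in A_i]\bigr),
\end{equation*}
which is positive for exactly one orientation. Since $\text{MAS}(\widehat{G})$ is obtained by deleting a minimum-weight feedback arc set, it is the set of forward arcs of some linear order $\pi^\ast$ of $V$ that minimizes $\mathrm{FAS}(\pi)$, the total weight of backward-pointing arcs (maximizing forward weight and minimizing $\mathrm{FAS}$ are equivalent because their sum is the fixed total weight). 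Fixing a topological order $\sigma$ of the ground-truth DAG $G$, every true arc points forward under $\sigma$, so $\mathrm{FAS}(\sigma)$ consists only of backward \emph{noise} arcs. The entire claim $G\subseteq\text{MAS}(\widehat{G})$ thus reduces to showing that, with high probability, no optimal order can reverse a true arc.

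I would isolate two deterministic events. Let $E_1$ be the event that every true arc $e=(u\to v)\in A$ satisfies $W(u,v)>N\epsilon$, and let $E_2$ be the event that every unconnected pair has $|W(u,v)|\le \tau:=N\epsilon/U$. On $E_1\cap E_2$ the total weight of all noise arcs is at most $U\tau=N\epsilon$, strictly below the weight of any single true arc. The combinatorial heart is then the following comparison: for any order $\pi$ that sends a nonempty set $R\subseteq A$ of true arcs backward, writing $N_\pi,N_\sigma\ge 0$ for the backward noise weight under $\pi,\sigma$,
\begin{equation*}
\mathrm{FAS}(\pi)-\mathrm{FAS}(\sigma) = \sum_{e\in R} W(e) + N_\pi - N_\sigma \;\ge\; \min_{e\in A} W(e) - N_\sigma .
\end{equation*}
Because $N_\sigma\le U\tau = N\epsilon < \min_{e\in A}W(e)$ on $E_1\cap E_2$, this difference is strictly positive. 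Hence every true-arc-reversing order is strictly worse than $\sigma$, so no optimal order $\pi^\ast$ reverses a true arc; all true arcs are therefore forward under $\pi^\ast$ and survive in $\text{MAS}(\widehat{G})$, i.e.\ $G\subseteq\text{MAS}(\widehat{G})$.

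The probabilistic part then controls $E_1$ and $E_2$ edge-by-edge. For $E_1$, the summands $X_i=\mathbf{1}[(u\to v)\in A_i]-\mathbf{1}[(v\to u)\in A_i]$ are i.i.d.\ on $\{-1,+1\}$ with mean $1-2\delta_1=2\epsilon$; Hoeffding's inequality on these range-$2$ variables gives $\mathbb{P}(W(u,v)\le N\epsilon)\le \exp(-N\epsilon^2/2)$, and a union bound over the $|A|$ true arcs yields the first error term (the factor $2$ is slack absorbed in the union bound). For $E_2$, the summands $Y_i=\mathbf{1}[(u\to v)\in A_i]-\mathbf{1}[(v\to u)\in A_i]$ are i.i.d., mean $0$, variance $\delta_2$, and bounded by $1$; Bernstein's inequality with deviation $\tau=N\epsilon/U$ and variance proxy $N\delta_2$ produces $\mathbb{P}(|W(u,v)|>\tau)\le 2\exp(-N\epsilon^2/(6U^2\delta_2+2U\epsilon))$, and a union bound over the $U$ pairs yields the second error term. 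Combining, $\mathbb{P}(G\subseteq\text{MAS}(\widehat{G}))\ge \mathbb{P}(E_1\cap E_2)\ge 1-\mathbb{P}(E_1^c)-\mathbb{P}(E_2^c)$, which is exactly the stated bound.

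I expect the main obstacle to be the combinatorial step, not the concentration: one must argue that purely \emph{local} control of individual arc weights forces the global optimum of an NP-hard objective to respect every true arc. The clean resolution is the $\mathrm{FAS}$-comparison above, whose crux is that the largest saving a competing order can extract from the noise (the term $N_\sigma$, bounded by the total noise weight $U\tau$) is calibrated to fall strictly below the cost of reversing even one true arc ($\min_{e}W(e)$). Making the two thresholds $N\epsilon$ and $\tau=N\epsilon/U$ line up is precisely what dictates the two exponents appearing in the final probability bound.
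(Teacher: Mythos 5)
Your proposal is correct and takes essentially the same route as the paper's proof: the same two high-probability events (a per-true-arc margin controlled by Hoeffding, and a per-unconnected-pair net-weight bound controlled by Bernstein), the same union bounds, and the same key calibration that the total noise weight $U\cdot N\epsilon/U = N\epsilon$ is dominated by the margin of any single true arc. Your explicit linear-order/FAS comparison against a topological order of $G$ is just a more detailed (and in fact more careful, since it handles strictness and simultaneous reversal of several true arcs) rendering of the combinatorial step that the paper dispatches via its Lemma~\ref{lem:mag_double_edge} and a terser per-arc argument.
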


The full proof is given in Appendix \ref{sec:proof}. From the theorem, we can see that the probability of failure decreases exponentially as the number of samples $N$ increases. Meanwhile, this guarantee only requires $\delta_1<0.5$ and does not place restrictions on $\delta_2$, which are very mild conditions.

\begin{table*}[h]
\captionsetup{skip=0pt} 
\centering
\scalebox{0.74}{

\begin{tabular}{@{}llcccccc@{}}
\toprule

\multicolumn{2}{l}{\textbf{Method}}          & \textbf{HumanEval} & \textbf{AlpacaEval} & \textbf{MATH}  & \textbf{GSM8k} & \textbf{GAIA}  & \textbf{Avg}   \\ \midrule
                                              & Llama3-8B            & 43.90              & 27.29               & 22.08          & 56.67          & 6.78           & 31.34          \\ \cmidrule(l){2-8} 
                                              & Mistral-7B           & 23.17              & 11.80               & 23.25          & 39.83          & 7.03           & 21.01          \\ \cmidrule(l){2-8} 
                                              & Qwen2-7B             & 48.58              & 25.71               & 59.92          & 76.75          & 7.70           & 43.73          \\ \cmidrule(l){2-8} 
                                              & Qwen2-72B            & 57.93              & 29.58               & 72.75          & 84.67          & 11.52          & 51.29          \\ \cmidrule(l){2-8} 
                                              & ContraSolver (Qwen2-72B)         & 65.42              & 31.12               & 74.95          & 86.84          & 12.22          & 54.11          \\ \cmidrule(l){2-8} 
                                              & ListPreference       & 61.52              & 31.67               & 71.75          & 85.0           & 10.90          & 52.16          \\ \cmidrule(l){2-8} 
\multirow{-7}{*}{\textbf{Single model}}       & Self-consistency     & 60.98              & 29.33               & 73.58          & 84.91          & 8.86           & 51.53          \\ \midrule

                                             & Elo            & 62.21 & 31.34 & 75.01 & 87.21 & 12.36 & 53.63 \\ \cmidrule(l){2-8} 
                                             & Bradley-Terry  & 62.83 & 32.91 & 74.97 & 86.64 & 12.07 &  53.88 \\ \cmidrule(l){2- 8} 
                                             & Merge Sort     & 62.81 & 31.88 & 74.83 & 86.90 & 12.13 & 53.71  \\  \cmidrule(l){2-8}
                                              & Llama3-8B            & 62.19              & 29.31               & 74.27          & 83.16          & 11.31          & 52.04          \\ \cmidrule(l){3-8} 
                                              & with graph denoising & 64.02              & 30.18               & 74.73          & 86.00          & 11.72          & 53.33          \\ \cmidrule(l){2-8} 
                                              & Mistral-7B           & 67.24              & 27.70               & 74.41          & 83.83          & 10.50          & 52.73          \\ \cmidrule(l){3-8} 
                                              & with graph denoising & 68.73              & 29.93               & 74.77          & 83.91          & 10.74          & 53.61          \\ \cmidrule(l){2-8} 
                                              & Qwen2-7B             & 61.58              & 28.69               & 74.50          & 85.41          & 11.11          & 52.25          \\ \cmidrule(l){3-8} 
                                              & with graph denoising & 65.85              & 29.44               & 74.79          & 86.38          & 11.25          & 53.54          \\ \cmidrule(l){2-8} 
                                              & Qwen2-72B            & 60.97              & 31.04               & 74.73          & 86.47          & 12.14          & 53.07          \\ \cmidrule(l){3-8} 
\multirow{-8}{*}{\textbf{Single evaluator}}   
& with graph denoising & 68.90              & 31.17               & 75.33          & 87.45          & 12.26          & 55.02          \\ \midrule
& {Majority Voting } & {66.18}              & {29.57}               & {74.77}          & {86.42}          & {11.72}          & {53.73}          \\ \cmidrule(l){3-8} 
& {Weight Majority Voting } & {66.43} & {30.12} & {74.79} & {86.68} & {11.84} & {53.97} \\ \cmidrule(l){3-8} 
& {PRD}       & {66.53}              & {30.33}               & {74.89}          & {86.75}          & {11.83}          & {54.07}          \\  \cmidrule(l){3-8} 
& {ChatEval}  & {66.51}              & {30.26}               & {74.79}          & {86.85}          & {11.92}          & {54.07}          \\  \cmidrule(l){3-8} 
& {SPA}  & {66.32}              & {30.05}               & {74.82}          & {86.57}          & {11.81}          & {53.91}          \\  \cmidrule(l){3-8} 
& \OURS (w/o denoising)  & 69.25              & 30.98               & 74.29          & 87.17          & 12.68          & 54.87          \\ \cmidrule(l){3-8} 
\multirow{-7}{*}{\textbf{Multiple evaluator}} & \OURS                  & \textbf{71.86} & \textbf{32.95} & \textbf{76.57} & \textbf{89.76} & \textbf{13.74} & \textbf{56.98} \\ \bottomrule
\end{tabular}

}

\caption{
Performance comparison of response selection methods across five benchmarks. \OURS{} consistently outperforms baseline methods, highlighting the effectiveness of graph denoising and multi-evaluator aggregation. 
\textit{Single model} denotes directly using the evaluator to answer the question. \textit{Single evaluator} and \textit{Multi evaluator} refer to selecting the best response from ten Qwen2-72B-generated candidates using a single or multiple evaluators, respectively.
}
\label{table:ansselectionQwen}
\end{table*}

\section{Response Selection for Test-time Scaling}
\label{sec:ans-select}

\paragraph{Experiment Setup.}
In this section, we evaluate the performance of \OURS{} on five benchmarks: HumanEval~\citep{chen2021evaluating}, AlpacaEval~\citep{alpaca_eval}, MATH~\citep{hendrycks2021measuring}, GSM8k~\citep{chen2021evaluating}, and {GAIA}~\citep{mialon2023gaia}. 
The Qwen2-72B~\citep{yang2024qwen2} model ($\mathcal{M}$) generates ten candidate responses per question, and we assess the effectiveness of different methods in selecting the best response.
We evaluate performance using three setups. 
First, in the \textit{single model} setting, the baselines include ContraSolver\citep{zhang2024contrasolver}, Self-consistency\citep{wang2022self}, and direct evaluation with  models (Llama3-8B, Mistral-7B, Qwen2-7B and Qwen2-72B).
Additionally, we include a baseline called ListPreference, where instead of pairwise comparisons, all candidate responses are input into Qwen2-72B for selecting the most appropriate response.
Then, in the \textit{single evaluator} setting, individual evaluators (Llama3-8B, Mistral-7B, Qwen2-7B, Qwen2-72B) select the best response from \(\mathcal{M}\)’s outputs, with and without applying \OURS{}’s graph denoising. 
We additionally report results for three classical baselines in this setting: {Elo}~\cite{Jiang2023LLMBlenderEL}, {Bradley–Terry} (BTL)~\cite{Bradley1952RankAO,Lu2011LearningMM}, and {MergeSort}~\cite{Shyamasundar1996IntroductionTA}. 
Finally, in the \textit{multiple evaluators} setup, we combine three small evaluators (Llama3-8B, Qwen2-7B, Mistral-7B) to select responses from Qwen2-72B with \OURS{}. 
We additionally include four aggregation baselines: {Majority Voting}, {Weighted Majority Voting}, {PRD}~\cite{Li2023PRDPR}, {ChatEval}~\cite{Chan2023ChatEvalTB}, and SPA~\cite{kadekodiselective}.
We present the results of \OURS{} and its variant (w/o denoising), which ensembles the preference graphs without the denoising step.

Note that, among the baselines we report: (i) in the single model setting, ContraSolver, ListPreference, and Self-consistency use Qwen2-72B as the base model; (ii) in the single evaluator setting, Elo, Bradley–Terry, and MergeSort use Qwen2-72B as the base model; and (iii) in the multiple evaluators setting, Majority Voting, Weighted Majority Voting, PRD, ChatEval and SPA use Llama3-8B, Mistral-7B, and Qwen2-7B as the base model set.
For details on the datasets and baselines, please refer to Appendix~\ref{sec:implementation_details}.

\begin{figure}[h]
\centering
\includegraphics[width=1.0\linewidth]{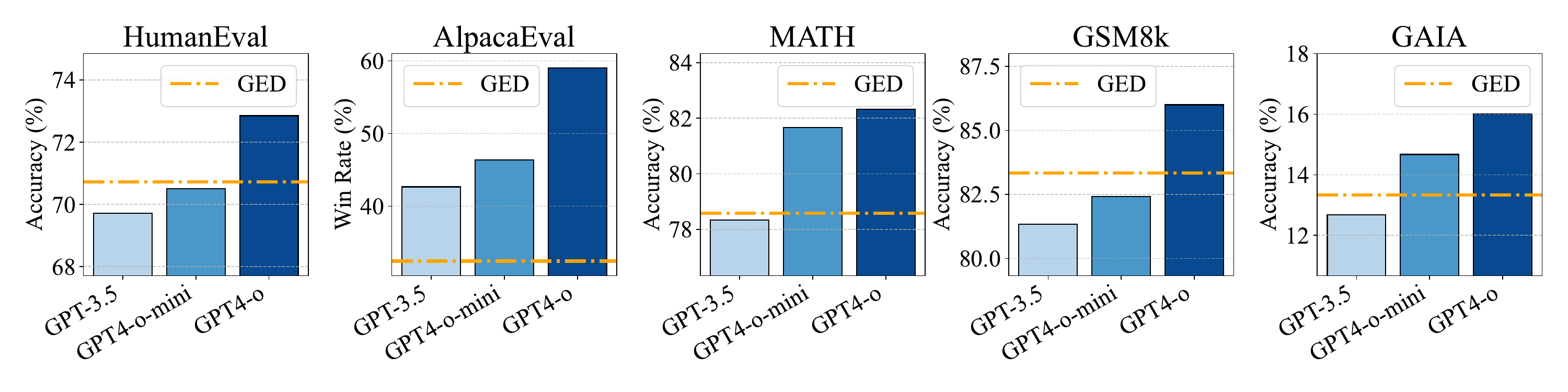}
\captionsetup{skip=0pt} 
\caption{
Comparison of \OURS{} with GPT-3.5, GPT-4-o-mini, and GPT-4-o on 100 randomly selected tasks. 
\OURS{} consistently outperforms GPT-3.5 across all tasks and surpasses GPT-4-o-mini on challenging tasks like HumanEval and GSM8k, showcasing the effectiveness of weak evaluator aggregation with graph denoising.
}
\label{fig:real_API}
\end{figure}

\paragraph{Main results.}
Table~\ref{table:ansselectionQwen} presents the results of the response selection task across five benchmarks. 
\OURS{} consistently outperforms all baseline methods, including  single model evaluations (\textit{single model}), direct response selection by individual models (\textit{single evaluator}) and Multiple evaluator. 
This demonstrates the strength of aggregating preference evaluators with \OURS{}, particularly when coupled with graph denoising, which enhances response quality by filtering out noise and biases.
Furthermore, by combining preference graphs derived from smaller models (Llama3-8B, Mistral-7B, Qwen2-7B), \OURS{} outperforms a much larger evaluator (Qwen2-72B). This underscores the value of ensemble methods in mitigating the limitations of individual evaluators.
Then, the denoising process proves to be crucial for improving consistency and overall response quality. The substantial performance gains observed when using \OURS{} with denoising, compared to both the single evaluator setup and the ensemble without denoising, highlight its importance in refining response selection.
For Majority Voting, while it improves upon individual evaluators, it still underperforms \OURS{}, highlighting \OURS{}'s ability to capture nuanced evaluation signals and reduce inconsistencies.
Additionally, we observed that the ListPreference baseline performed worse than Qwen2-72B as single evaluator, likely due to LLM limitations in handling long-text.
Lastly, to further evaluate \OURS{}, we  compared its performance with GPT-3.5, GPT-4-o-mini, and GPT-4-o. 
Due to computational and API cost constraints, we limited the evaluation to 100 data points for each task.
As shown in Figure~\ref{fig:real_API}, \OURS{} consistently outperformed GPT-3.5 across all tasks and surpassed GPT-4-o-mini on challenging benchmarks like HumanEval and GSM8k. 
These results highlight the superiority of \OURS{}, particularly in leveraging multi-preference evaluators and graph denoising to outperform individual state-of-the-art models.
More discussion on cost is provided in Appendix~\ref{sec:cost_detail}.

\begin{table*}[t]

\centering
\scalebox{0.7}{

\begin{tabular}{@{}llcccccc@{}}
\toprule
\textbf{Model}                                &                      & \textbf{Weight Score} & \textbf{Kemeny} & \textbf{\begin{tabular}[c]{@{}c@{}}Weighted\\ Kemeny\end{tabular}} & \textbf{\begin{tabular}[c]{@{}c@{}}Pairwise\\ Majority\end{tabular}} & \textbf{\begin{tabular}[c]{@{}c@{}}Weighted\\ Pairwise\\ Majority\end{tabular}} & \multicolumn{1}{l}{\textbf{Avg.}} \\ \midrule
                                              & Llama3-70B           & 50.88                 & 60.80           & 60.80                                                              & 62.23                                                                & 61.85                                                                           & 59.31                                                     \\
                                              & with graph denoising & 52.44                 & 62.54           & 62.54                                                              & 63.92                                                                & 62.18                                                                           & 60.72                                                     \\ \cmidrule(l){2-8} 
                                              & Qwen2-72B            & 65.34                 & 59.87           & 67.39                                                              & 66.05                                                                & 66.59                                                                           & 65.04                                                     \\
                                              & with graph denoising & 66.05                 & 70.43           & 70.43                                                              & 72.32                                                                & 72.41                                                                           & 70.32                                                     \\ \cmidrule(l){2-8} 
                                              & Qwen1.5-72B          & 63.64                 & 60.72           & 60.72                                                              & 62.65                                                                & 63.28                                                                           & 62.20                                                     \\
                                              & with graph denoising & 64.81                 & 61.77           & 61.77                                                              & 64.36                                                                & 64.76                                                                           & 63.49                                                     \\ \cmidrule(l){2-8} 
                                              & Mistral-8×7B         & 64.90                 & 68.74           & 68.74                                                              & 73.06                                                                & 72.87                                                                           & 69.66                                                     \\
\multirow{-8}{*}{\textbf{Single evaluator}}   & with graph denoising & 65.47                 & 70.06           & 69.92                                                              & 73.39                                                                & 73.21                                                                           & 70.41                                                     \\ \midrule
                                              & \OURS (w/o ensemble)   & 62.82                 & 68.44           & 68.44                                                              & 69.34                                                                & 67.34                                                                           & 67.27                                                     \\ \cmidrule(l){3-8} 
                                              & \OURS (w/o denoising)  & 64.84                 & 69.23           & 69.81                                                              & 75.35                                                                & 74.37                                                                           & 70.72                                                     \\ \cmidrule(l){3-8} 
\multirow{-3}{*}{\textbf{Multiple evaluator}} & \OURS                  & \textbf{66.59}        & \textbf{71.14}  & \textbf{71.14}                                                     & \textbf{77.17}                                                       & \textbf{76.46}                                                                  & \textbf{72.50}                                            \\ \bottomrule
\end{tabular}

}
\caption{Results of the model ranking task, evaluated using Ranking Correction. Higher correlation values indicate a stronger alignment with the ground truth rankings.}

\label{table:modelrank}
\end{table*}

\section{Model Ranking for Evaluation}
\label{sec:prefer-eval}
\paragraph{Experiment Setup.}
In this section, we evaluate the effectiveness of \OURS{} in the model ranking task within a human preference setting, using the AlpacaEval 2.0~\citep{alpaca_eval}.
We employ 30 widely used models from the AlpacaEval dataset as our model set \( \mathcal{M} \), while the benchmark’s questions form the question set \( Q \). 
The rankings provided by the AlpacaEval benchmark serve as ground truth for evaluating the accuracy of various ranking methods.  
This is justified by AlpacaEval's strong correlation with Chatbot Arena rankings, making it a reasonable proxy for human judgments~\citep{dubois2024length}.
We adopt Ranking Correction, measured by the Spearman rank correlation coefficient, to evaluate the similarity.
To generate rankings, we utilize outputs from the open-source models Llama3-70B, Qwen2-72B, Mistral-8$\times$7B, and Qwen1.5-72B as our evaluators. 
We investigate two variants of \OURS{}: 
(w/o ensemble)  denoises the preference graphs from different evaluators for the same question, converts each into a ranking, and then ensembles these rankings to produce the final output, while  (w/o denoising)  directly ensembles the preference graphs to obtain the final ranking without denoising.
For details on the datasets and baselines, please refer to Appendix~\ref{sec:implementation_details}.

\paragraph{Main results.}
The results, presented in Table~\ref{table:modelrank}, show that \OURS{} outperforms all single-model baselines, highlighting the significant improvement in ranking accuracy achieved by leveraging preference information from multiple evaluators. 
Moreover, \OURS{} surpasses the (w/o ensemble) variant, indicating that generating rankings through graph ensemble first prevents information loss compared to converting individual graphs into rankings. 
When the ensemble graph is not denoised (w/o denoising), residual noise can adversely affect the final ranking quality. 
Additionally, our denoising method also enhances results in single-model settings.

\begin{figure}[h]
\centering
\includegraphics[width=1.0\linewidth]{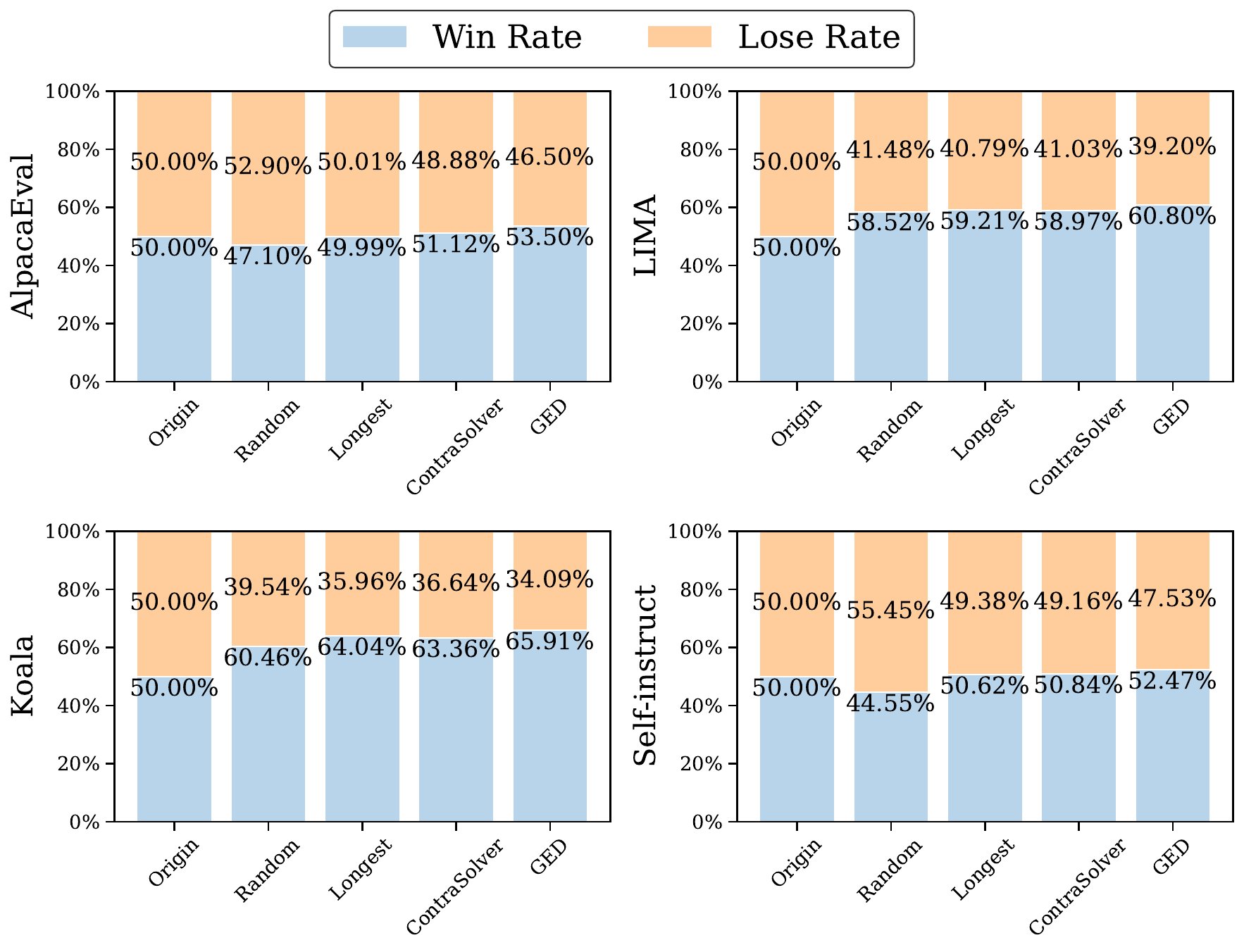}
\captionsetup{skip=0pt} 
\caption{
Performance comparison of different methods (Random, Longest, ContraSolver, and \OURS{}) across multiple benchmarks.
The results show \OURS{} effectively filters low-quality responses, improving performance and model alignment over baselines.
}
\label{fig:tuning}
\end{figure}

\section{Data Selection for Model Fine-tuning}
\label{sec:ans-instruct}

\paragraph{Experiment Setup.}

\begin{table}[t]
\centering
% \vspace{0.5em}
\captionsetup{skip=0pt}
\scalebox{0.7}{
\begin{tabular}{@{}llccccc@{}}
\toprule
\textbf{\makecell{Base \\ Model}} & \textbf{} & \textbf{\makecell{Harmless \\ (base)}} & \textbf{\makecell{Helpful \\ (base)}} & \textbf{\makecell{Helpful \\ (online)}} & \textbf{\makecell{Helpful \\ (rej.)}} & \textbf{Avg.} \\
\midrule
& Origin       & 69.68 & 61.65 & 64.84 & 63.37 & 64.89 \\
& Random       & 69.85 & 63.53 & 67.00 & 65.85 & 66.56 \\
& Longest      & 69.32 & 64.15 & 66.56 & 66.30 & 66.58 \\
& ContraSolver & 69.75 & 63.28 & 66.71 & 65.86 & 66.40 \\
\multirow{-5}{*}{\rotatebox{90}{\textbf{Llama-2-7B}}} & \textbf{Ours} & \textbf{72.27} & \textbf{65.56} & \textbf{69.44} & \textbf{67.60} & \textbf{68.72} \\
\midrule
& Origin       & 61.30 & 59.95 & 65.21 & 63.75 & 62.55 \\
& Random       & 59.85 & 59.34 & 63.38 & 61.87 & 61.11 \\
& Longest      & 61.66 & 59.73 & 64.92 & 62.92 & 62.31 \\
& ContraSolver & 60.87 & 59.83 & 64.23 & 63.26 & 62.05 \\
\multirow{-5}{*}{\rotatebox{90}{\textbf{Mistral-7B}}}  & \textbf{Ours} & \textbf{64.57} & \textbf{64.48} & \textbf{69.66} & \textbf{67.73} & \textbf{66.61} \\
\bottomrule
\end{tabular}
}
\caption{Performance comparison of different methods  on model alignment task across the HH-RLHF benchmark.}
\label{table:tuning}
\end{table}

In this section, we explore the effects of various data selection methods for model alignment on Llama-2-7B~\citep{touvron2023llama} and Mistral-7B~\citep{jiang2023mistral} through instruct tuning. 
Specifically, we randomly sampled 5,000 data points from UltraFeedback~\citep{cui2023ultrafeedback} and used Qwen1.5-14B~\citep{yang2024qwen2} to generate eight responses per data point as instruct data.
We then applied four different methods—Random, Longest~\citep{zhao2024long}, ContraSolver (using Qwen2-72B as the evaluator)~\citep{zhang2024contrasolver}, and our proposed \OURS{}, which leverages Llama3-8B, Mistral-7B, and Qwen2-7B as evaluators—to select a subset of these responses for model alignment training.
The Origin refers to the performance of the base model without alignment.
The models were evaluated on the HH-RLHF~\citep{bai2022training} benchmark, which comprises four subsets: Harmless (base), Helpful (base), Helpful (online), and Helpful (rejection).
For evaluation, we employed the same Reward model as in prior work~\citep{song2024preference,yu2023constructive} to quantify human preference levels.
The results are presented in Table~\ref{table:tuning}.
To ensure a comprehensive assessment, we further evaluated the models—using the Llama-2-7B backbone—on additional benchmarks, including
AlpacaEval 2.0~\citep{alpaca_eval},
LIMA~\citep{zhou2023lima}, 
% Vicuna~\citep{vicuna2023}, 
Koala~\citep{vu2023koala}, and
Self-Instruct~\citep{wang2022self}, in accordance with recent studies~\citep{chen2023alpagasus,zhang2024recost,hu2024rethinking}.
The corresponding results are summarized in Figure~\ref{fig:tuning}.
For details on the datasets and baselines, please refer to Appendix~\ref{sec:implementation_details}.

\paragraph{Main results.}

From Table~\ref{table:tuning}, we observe that \OURS{} consistently outperforms all baseline methods, demonstrating its effectiveness in selecting high-quality responses when multiple answers are available for a given instruction.
When faced with multiple responses \( y_1, y_2, \ldots, y_n \) for a given instruction \( x \), the Random selection method can have a detrimental impact, especially when the quality of the responses is inconsistent. 
This effect is most evident with the Mistral-7B, where Random selection actually performs worse than the Origin, indicating that randomly chosen data points can introduce noise and degrade the model's performance.
Moreover, we find that simply selecting the longest response does not always lead to the best outcomes. 
While longer responses may provide more detailed answers, they are not necessarily better in terms of quality, particularly when both high-quality and low-quality answers exist for the same question. 
This is reflected in the results where the Longest method underperforms compared to both ContraSolver and \OURS{}, emphasizing that response length alone is not always a reliable criterion.
From Figure~\ref{fig:tuning}, we observe similar trends as in Table~\ref{table:tuning}.
\OURS{} consistently outperforms all baselines across various datasets, demonstrating its effectiveness in selecting high-quality responses.
Notably, in AlpacaEval and Self-Instruct, the Random baseline performs worse than the Origin model, highlighting that when response quality varies significantly, poor selection can degrade model performance.
In contrast, \OURS{} leverages preference graphs and denoising techniques to filter out low-quality responses, ensuring more robust and reliable performance, particularly in settings with inconsistent responses, as it removes evaluation noise and leads to more robust performance.

\section{Case Study}

\begin{figure}[h]
    \centering
    \resizebox{0.50\textwidth}{!}{
        \begin{tabular}{cc}
            \subfigure[Raw Preference Graph 1]{\includegraphics[width=0.24\textwidth]{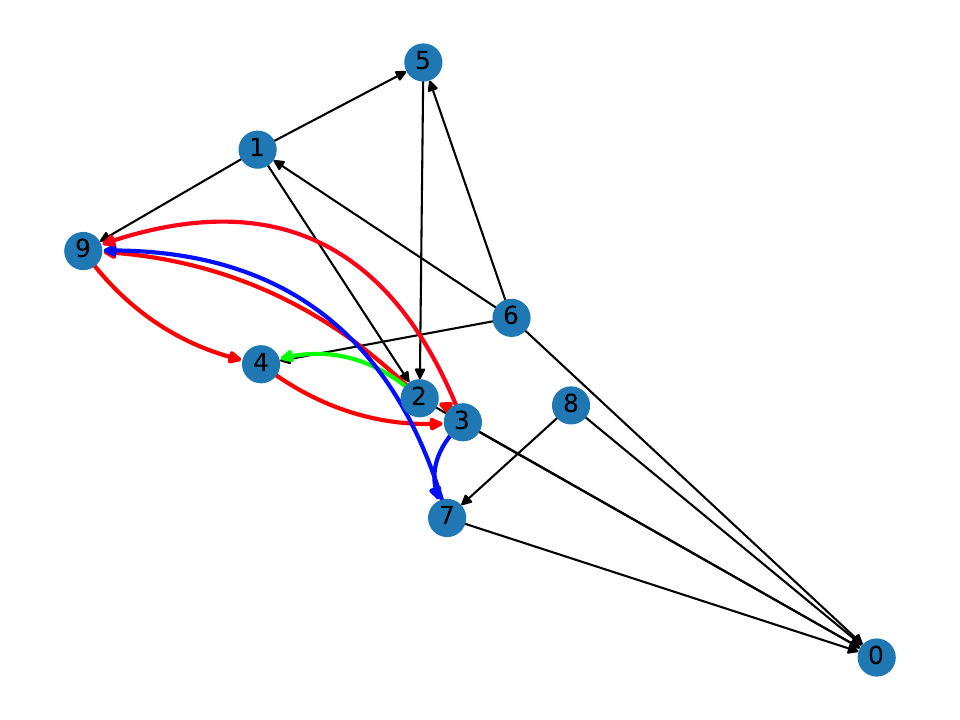}} &
            \subfigure[Denoised Preference Graph 1]{\includegraphics[width=0.24\textwidth]{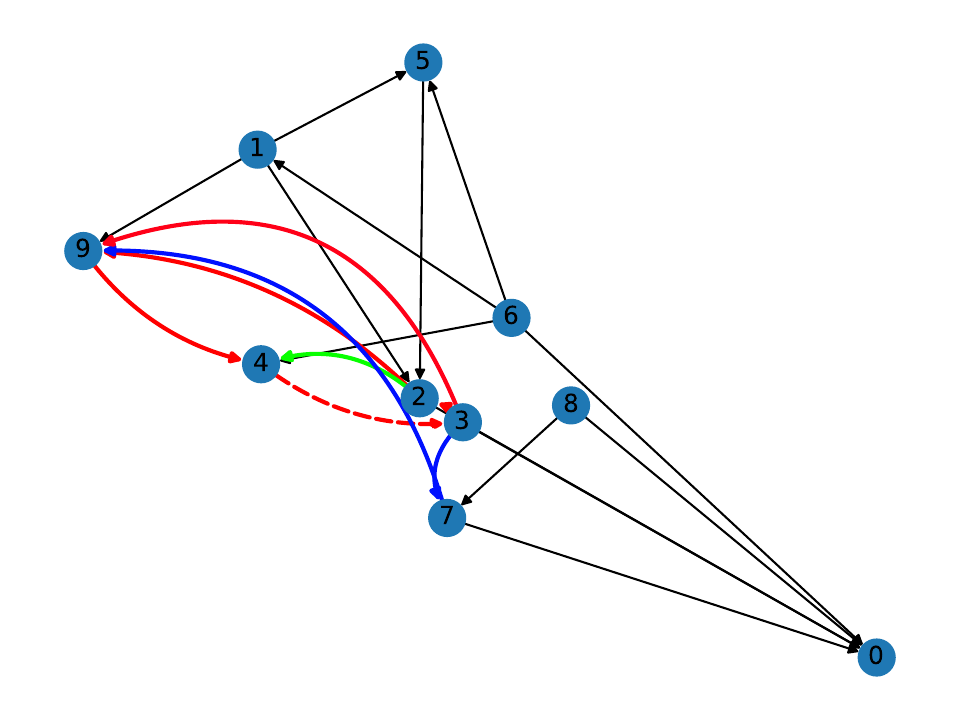}} \\
            \subfigure[Raw Preference Graph 2]{\includegraphics[width=0.24\textwidth]{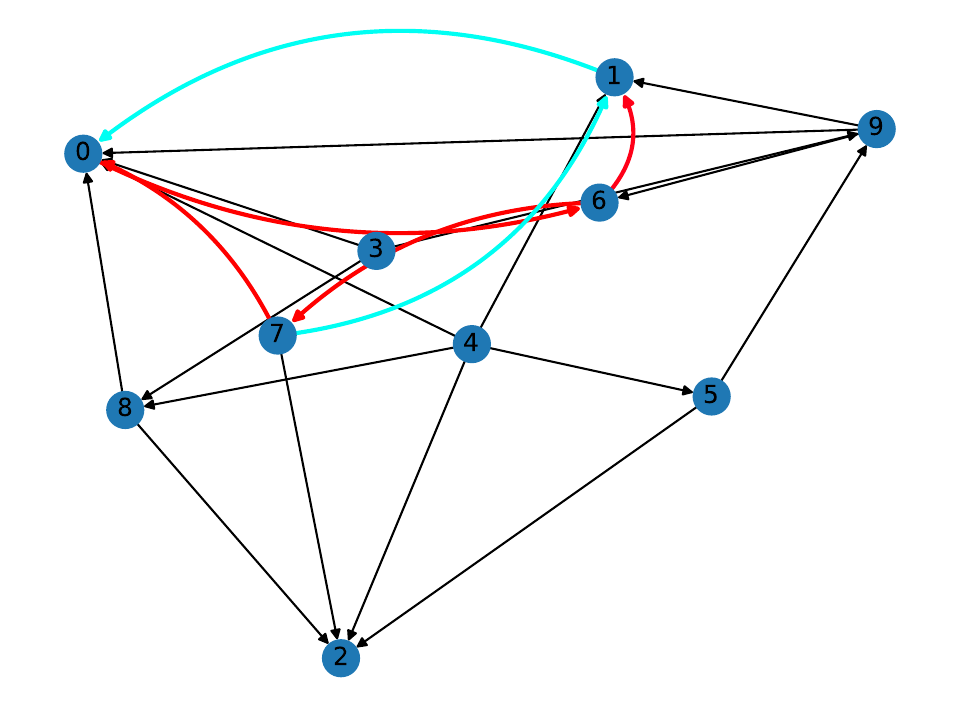}} &
            \subfigure[Denoised Preference Graph 2]{\includegraphics[width=0.24\textwidth]{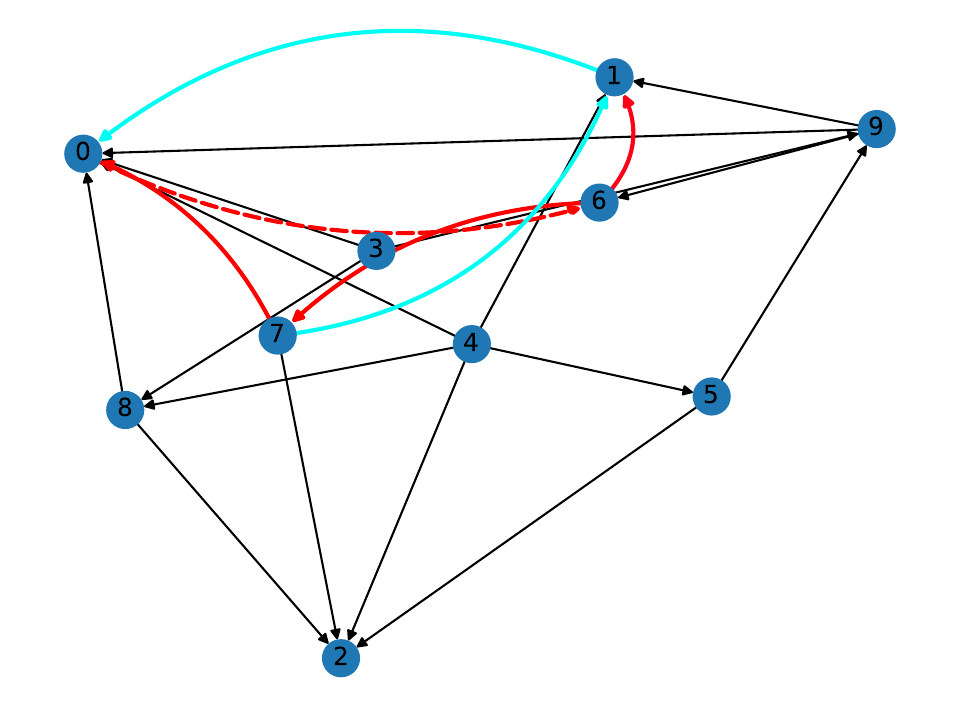}}
        \end{tabular}
    }
    \caption{Case studies showcasing the raw and denoised preference graphs. 
    }
    \label{fig:casestudy}
\end{figure}

In this section, we present two HumanEval case studies illustrating \OURS{} on evaluator-aggregated graphs. Raw graphs are built from Llama3-8B, Mistral-7B, and Qwen2-7B judging ten Qwen2-72B responses (nodes 0–9). Figure~\ref{fig:casestudy} contrasts raw graphs (a, c) with denoised DAGs (b, d); dashed edges indicate removals.
In Case 1, the raw graph contains cycles (e.g., \(9 \succ 4 \succ 3 \succ 9\), \(9 \succ 4 \succ 3 \succ 2 \succ 9\)). Removing a single edge \(4 \to 3\) breaks all cycles, yielding the DAG in Figure~\ref{fig:casestudy}(b).
In Case 2, cycles such as \(7 \succ 0 \succ 6 \succ 7\) and \(7 \succ 1 \succ 0 \succ 6 \succ 7\) are eliminated by deleting \(0 \to 6\), producing the DAG in Figure~\ref{fig:casestudy}(d).
In both cases, alternative single-edge deletions fail to restore acyclicity without further removals. \OURS{} thus identifies a minimal feedback arc set, preserving information while ensuring consistency.

\section{Conclusion}
In this paper, we presented \OURS{}, a framework designed to address inconsistencies in pairwise preference evaluations by LLMs. By employing graph ensemble techniques and denoising, \OURS{} reduces cyclic patterns and enhances the reliability of evaluation outcomes. Our theoretical analysis shows that \OURS{} can recover the ground truth DAG under reasonable conditions, improving consistency in preference rankings.
Extensive experiments across response selection, model ranking, and instruct tuning demonstrate the efficacy of our method. 
\OURS{} consistently outperformed baseline methods in both single-evaluator and multi-evaluator settings, particularly in scenarios where combining small evaluators led to superior results over larger individual evaluators. 
Future work will explore extending \OURS{} to broader evaluation frameworks and applying its principles to more complex decision-making tasks, including multi-agent systems and human-AI interaction.

\bibliography{aaai2026}
\clearpage

\clearpage

\appendix

\section{Implementation Details}
\label{sec:implementation_details}

\subsection{Experimental Setup}
\label{sec:Ex_setup}
All experimental procedures were conducted on a machine equipped with an AMD EPYC 7543 32-Core Processor, 512GB memory, 128 CPUs, and four 80GB NVIDIA A800 GPUs. 
The references to Llama-2-7B, Llama3-70B, Llama3-8B, Mistral-7B, Mistral-8×7B, Qwen2-7B, and Qwen2-72B in the main text refer to the specific models: Llama-2-7b-chat-hf, Meta-Llama-3-70B-Instruct, Meta-Llama-3-8B-Instruct, {Mixtral-7B-Instruct-v0.3, Mixtral-8x7B-Instruct-v0.1}, Qwen2-7B-Instruct, and Qwen2-72B-Instruct. 
We utilized the reward model oasst-rm-2-pythia-6.9b-epoch-1 following prior works~\citep{song2024preference,yu2023constructive}.
Each experiment was repeated three times, and the average performance was reported as the final result.
{Our training script was adapted from the example provided in LlamaFactory
~\citep{zheng2024llamafactory}. }
The training was configured with a batch size of 1 per device, gradient accumulation steps of 4, a learning rate of 1e-5, and the model was trained for 3 epochs, with warmup over 20 steps and a cosine learning rate scheduler.
{For generating diverse responses from LLMs, we followed the configuration in \cite{yuan2024self}, setting  \(T = 0.7\) and \(p = 0.9\).} 
For tasks such as AlpacaEval~\citep{dubois2024alpacafarm}, we used GPT-4-o unless stated otherwise.

\subsection{Details of Evaluator Selection Across Different Tasks}
\label{subsec:evaluator_selection}
In this subsection, we provide more detailed information about the selection of evaluators across different tasks.

\paragraph{Response Selection.}
In the response selection task, we evaluated both single models and ensembles of evaluators using the \OURS method. For single model evaluation, we assessed the standalone performance of Llama3-8B, Mistral-7B, Qwen2-7B, and Qwen2-72B on benchmarks such as HumanEval, AlpacaEval, MATH, GSM8k, and GAIA. This served two purposes: first, to establish the baseline performance of smaller models (Llama3-8B, Mistral-7B, Qwen2-7B) that are later combined using \OURS, and second, to compare against Qwen2-72B, where we tested a baseline approach by randomly selecting one response from the 10 it generated for each question. For single evaluator evaluation, each model (Llama3-8B, Mistral-7B, Qwen2-7B, and Qwen2-72B) was also used as an evaluator to rank responses generated by Qwen2-72B. Notably, Qwen2-72B acted as a self-evaluator, selecting the best response from its own generated outputs. Finally, for multiple evaluator evaluation, our \OURS method combined the evaluations of Llama3-8B, Mistral-7B, and Qwen2-7B to rank responses.

\paragraph{Model Ranking.}
For the model ranking task, we selected larger models as evaluators to ensure alignment with rankings produced by GPT-4. Specifically, we used Llama3-70B, Qwen2-72B, Qwen1.5-72B, and Mistral-8×7B as single evaluators. This choice was guided by two factors: performance considerations and practical feasibility. Larger models generally produce more reliable rankings, closely aligning with GPT-4, and the AlpacaEval benchmark, containing 805 tasks, makes the computational cost of using larger models acceptable. In the multiple evaluator setting, our \OURS method aggregated the evaluations from these four larger models to produce robust and consistent rankings. The combination of these high-capacity models ensures that our approach yields rankings that are both accurate and consistent across tasks.

\paragraph{Instruction Tuning.}
In the instruction tuning task, the objective was to perform data selection for model training. For this task, we employed Llama3-8B, Mistral-7B, and Qwen2-7B as evaluators. These models were selected because they balance computational efficiency and evaluation performance, making them suitable for iterative instruction tuning processes. The evaluators’ pairwise preferences were aggregated using \OURS to identify the most appropriate responses for training. This approach ensured that the selected data pairs reflected a consensus among the evaluators while keeping computational costs manageable.

\subsection{Evaluation Settings: Single Model vs. Single Evaluator}
\label{subsec:evaluation_settings}
The "single model" setting evaluates each model's (Llama3-8B, Mistral-7B, Qwen2-7B, Qwen2-72B) outputs directly on benchmarks like HumanEval, AlpacaEval, MATH, GSM8k, and GAIA, without selection or modification. In contrast, the "single evaluator" setting uses these models to select the best response from ten candidates generated by Qwen2-72B, assessing their evaluation capability. The key difference is that the single model setting focuses on generation quality, while the single evaluator setting assesses evaluation ability.

\subsection{Definition of Cycle Rate}
\label{subsec:Cyclerate}

The Cycle Rate is the percentage of preference graphs with at least one cycle, indicating inconsistency in pairwise comparisons. For example, if an evaluator produces cycles in 100 out of 164 graphs for the HumanEval dataset, the Cycle Rate is \( \frac{100}{164} \times 100 = 60.97\% \).
A lower Cycle Rate indicates greater consistency, while a higher rate suggests evaluator biases or difficulties with ambiguous comparisons. This metric helps assess the reliability of evaluators, such as GPT-4-o and GPT-4-o-mini, across different datasets.

\subsection{Models Used for Ranking}
\label{subsec:ModelForRanking}

We evaluate model ranking using 30 widely used models from the AlpacaEval dataset, covering diverse architectures and capabilities. Our model set $\mathcal{M}$ includes families such as Llama~\citep{touvron2023llama}, Vicuna~\citep{Zheng2023JudgingLW}, GPT~\citep{achiam2023gpt}, Claude~\citep{claude3news}, Qwen~\citep{yang2024qwen2}, Mistral~\citep{jiang2023mistral}, Yi~\citep{young2024yi}, and WizardLM~\citep{xu2023wizardlm}, ensuring a broad assessment.
Proprietary models include OpenAI’s GPT series (e.g., \texttt{gpt-3.5-turbo-0301}, \texttt{gpt-4o-2024-05-13}) and Anthropic’s Claude models (e.g., \texttt{claude-2}, \texttt{claude-3-opus-20240229}), serving as strong baselines. Open-source models include multiple versions of Llama (e.g., \texttt{Meta-Llama-3-8B-Instruct}, \texttt{llama-2-70b-chat-hf}) and Qwen (e.g., \texttt{Qwen1.5-72B-Chat}, \texttt{Qwen2-72B}).
We also incorporate Mistral models (e.g., \texttt{Mistral-7B-Instruct-v0.2}, \texttt{Mixtral-8x22B-Instruct-v0.1}), reflecting advances in mixture-of-experts architectures. Additional selections include \texttt{gemini-pro}, \texttt{tulu-2-dpo-70b}, \texttt{oasst-sft-llama-33b}, \texttt{dbrx-instruct}, \texttt{wizardlm-13b}, and \texttt{Yi-34B-Chat}, contributing to ranking diversity.
This diverse model set enables a comprehensive evaluation of ranking accuracy using AlpacaEval’s ground-truth rankings.

\subsection{Dataset}
\label{sec:Dataset}

In this appendix, we provide detailed information about the  datasets used in main text.

\begin{itemize}
    \item {UltraFeedback}~\citep{cui2023ultrafeedback}:  UltraFeedback is a large-scale, fine-grained, diverse preference dataset, used for training powerful reward models and critic models. We collect about 64k prompts from diverse resources (including UltraChat, ShareGPT, Evol-Instruct, TruthfulQA, FalseQA, and FLAN). We then use these prompts to query multiple LLMs (see Table for model lists) and generate 4 different responses for each prompt, resulting in a total of 256k samples.
    \item {HH-RLHF}~\citep{bai2022training}:  The HH-RLHF dataset contains human preference data for training language models to be helpful and harmless, as well as red teaming data to identify harmful model outputs. The preference data includes pairs of chosen and rejected responses, while the red teaming data includes transcripts of adversarial interactions with AI assistants, rated for harmfulness. We strictly follow prior works~\citep{song2024preference,yu2023constructive}  for testing.

    \item{MATH~}\citep{hendrycks2021measuring}:  The MATH dataset consists of 12,500 challenging competition-level math problems, each with a detailed step-by-step solution. It is designed to teach models to generate answer derivations and explanations, aiding in mathematical reasoning. Despite progress in improving accuracy, the dataset highlights the limitations of large Transformer models in solving complex math problems without new algorithmic advancements.
    Due to the high resource cost of using the full test set, we randomly sampled 400 problems from the test set for evaluation.

    \item{GSM8k}~\citep{chen2021evaluating}:  GSM8K (Grade School Math 8K) is a collection of 8.5K high-quality math word problems designed for grade school students. It supports the task of multi-step reasoning and question answering in basic math. The problems require 2 to 8 steps, focusing on elementary arithmetic operations (addition, subtraction, multiplication, and division). The solutions are provided in natural language, making it accessible for evaluation of language models' internal reasoning. 
    GSM8K has been widely used to test logic and mathematical capabilities in language models, especially for benchmarks like the LLM Leaderboard.
    Due to the high computational cost of using the entire test set, we randomly sampled 400 data points from the test set for our evaluation.

    \item{GAIA}~\citep{mialon2023gaia}:  The GAIA dataset is a benchmark designed to evaluate next-generation LLMs with augmented capabilities like tooling and search access. It consists of over 450 complex questions with unambiguous answers, requiring various levels of autonomy and tooling. The dataset is divided into three levels, each increasing in difficulty, with a public dev set for validation and a private test set for evaluation. We used the entire test set for our evaluation.

    \item{HumanEval}~\citep{chen2021evaluating}:  The OpenAI HumanEval dataset contains 164 programming problems, each with a function signature, docstring, body, and unit tests. These problems are handwritten to ensure they were not included in the training sets of code generation models. The dataset is designed to evaluate the performance of models in Python code generation.
    We used the entire test set for evaluation.

    \item{AlpacaEval.}~\citep{dubois2024alpacafarm}: AlpacaEval consists of 805 instructions, including 252 from the self-instruct test set~\citep{wang2022self}, 188 from the Open Assistant (OASST) test set, 129 from Anthropic's helpful test set~\citep{zhou2023lima}, 80 from the Vicuna test set~\citep{vicuna2023}, and 156 from the Koala test set~\citep{vu2023koala}.

    \item{LIMA.}~\citep{zhou2023lima}: LIMA collects a training dataset of 1000 prompts and responses, curated to ensure stylistic consistency while accommodating diverse input types. It also includes an open-source test set of 300 prompts and a development set of 50. The data is primarily sourced from community-driven Q\&A platforms like Stack Exchange, wikiHow, and the Pushshift Reddit Dataset~\citep{baumgartner2020pushshift}, along with manually curated examples. 
    The inclusion of human-authored examples further increases dataset diversity. In our experiments, we use the LIMA test set to evaluate our models.

    \item{Vicuna.}~\citep{vicuna2023}: Vicuna organizes its 80 test instructions into eight distinct categories: Fermi problems, commonsense, roleplaying, coding/math/writing tasks, counterfactuals, knowledge, and general questions. This categorization aims to comprehensively assess different facets of chatbot performance. Prior work suggests that Vicuna's instructions are generally of lower complexity and difficulty~\citep{xu2023wizardlm}. We utilize the Vicuna test set to assess the performance of large language models across these varied categories of instructions.

    \item{Self-Instruct.}~\citep{wang2022self}: Self-Instruct contains 252 human-authored test instructions, each paired with a well-constructed output. This dataset is curated to simulate real-world use cases of instruction-following models, spanning various domains such as email composition, social media, productivity tools, and coding tasks. The instructions differ in format and complexity, featuring diverse task lengths and output types such as bullet points, tables, code snippets, and mathematical expressions. In our research, we utilized the Self-Instruct test set to rigorously evaluate our model's ability to follow detailed instructions across multiple domains.

    \item{Wizardlm.}~\citep{xu2023wizardlm}: Wizardlm consists of a training set of 70k examples derived from 52k instructions initially provided by Alpaca. 
    The test set includes 218 instructions sourced from various open-source projects and online communities, covering 29 distinct skills derived from real-world tasks. 
    These skills range from Code Generation \& Debugging to Reasoning, Mathematics, Writing, Complex Format Handling, and Mastery of Extensive Domains. 
    In our study, we employed the Wizardlm test set to evaluate the model's ability to adhere to detailed instructions comprehensively.

    \item{Koala.}~\citep{vu2023koala}: Koala comprises 180 real-world user queries sourced from the web, spanning diverse topics and typically reflecting a conversational tone. These queries are especially relevant for evaluating models intended for chat-based applications. To ensure no overlap with training data, any query yielding a BLEU score above 20\% compared to examples from our training set is excluded. Additionally, queries involving programming or non-English languages are omitted, as our evaluation team, composed of crowd-sourced raters, lacks the expertise to assess such content effectively. We exclusively used the Koala test set to gauge our model's proficiency in handling authentic conversational queries.
\end{itemize}

\subsection{Aggregation Process in \OURS Across Different Tasks}
\label{subsec:aggregation_process}

The \OURS implementation differs between response selection and model ranking tasks due to their objectives.
In {response selection}, \OURS aggregates preference graphs from multiple evaluators for each question into a single DAG. From this DAG, a final ranking is derived, and the top-ranked answer is selected as the output. Notably, aggregation is performed only at the per-question level, without a rank aggregation across questions.
In {model ranking}, \OURS involves two steps: first, aggregating the evaluators' preference graphs into a DAG for each question to rank the models, and second, employing a rank ensemble method to aggregate these per-question rankings into a final overall model ranking across all questions.
For details on the aggregation modeling, see Section~\ref{subsec: application}, covering Response Selection, Model Ranking, and Model Alignment.

\section{{Ablation}  study for Test Time scaling}

We evaluate the impact of removing the ensembling step in \OURS{}, referred to as the \textit{(w/o ensemble)} variant. In this case, individual evaluators’ preference graphs are denoised and converted to rankings, which are then aggregated using methods such as \textit{Weight Score}, \textit{Kemeny}, \textit{Weighted Kemeny}, \textit{Pairwise Majority}, and \textit{Weighted Pairwise Majority} (detailed in  Appendix~\ref{sec:rank_ensemble_method}).
For simplicity of presentation, we use \textit{Weight Score} to represent \textit{\OURS (w/o ensemble) (Weight Score)}. 
As shown in Figure~\ref{fig:ablation}, all \textit{(w/o ensemble)} methods consistently underperform compared to \OURS{}. This performance gap arises because converting graphs to ranks before aggregation leads to information loss. In contrast, \OURS{} ensembles the graphs directly, preserving more detailed preference information and resulting in better final rankings.

\begin{figure}[t]
\centering
\includegraphics[width=1.0\linewidth]{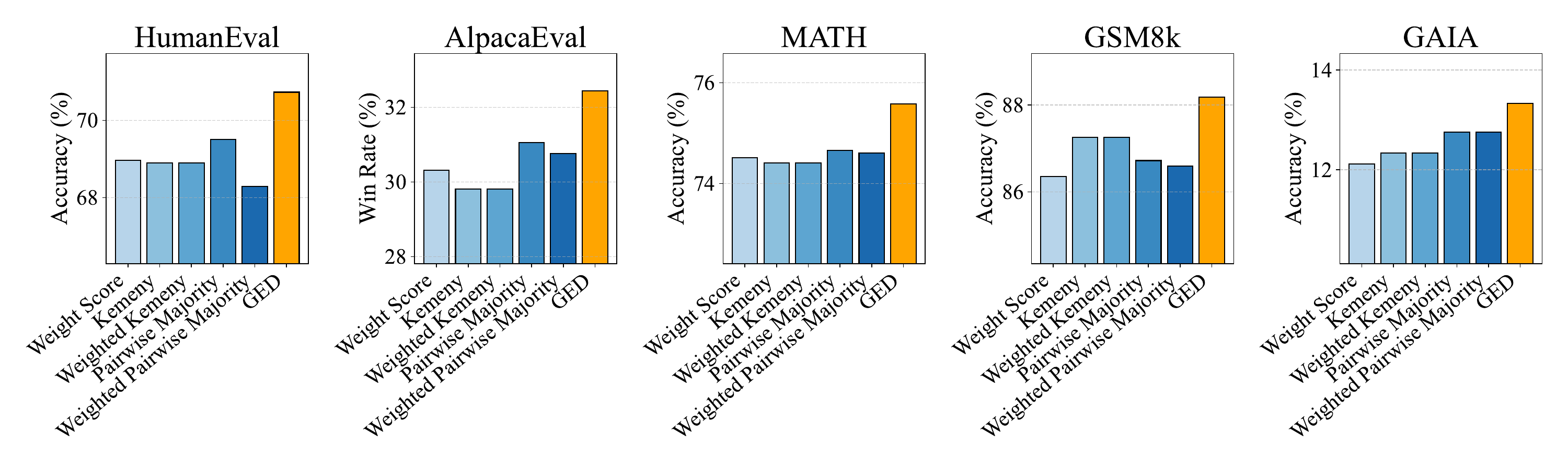}
\captionsetup{skip=0pt} 
\caption{
Comparison of \OURS{} and \textit{(w/o ensemble)} variants. 
\OURS{} outperforms due to preserving more information by directly ensembling preference graphs, while rank aggregation in the \textit{(w/o ensemble)} methods leads to performance loss.
}
\label{fig:ablation}
\end{figure}

\section{Limitation}
\label{sec:limit}
Our framework improves consistency in preference evaluations, but it implicitly promotes a "majority view" by aggregating judgments from multiple LLM-based evaluators. This may inadvertently suppress minority or dissenting preferences, especially in cases where subjectivity, cultural variation, or ethical nuance plays a critical role. While the goal is to reduce noise, there is a risk that important human disagreement is treated as inconsistency to be "denoised." In socially sensitive domains, such as education, healthcare, or content moderation, this could limit the diversity of acceptable outputs or reinforce dominant norms embedded in current LLMs. 

\begin{table}[t]
\centering
\caption{Performance comparison on nuanced quality metrics (\%). \OURS outperforms individual evaluators and random selection across Factuality, Relevance, Coherence, Inform., Helpful. and Validity metrics.}
\label{table:nuanced_metrics}
% \vspace{0.5em}
\scalebox{0.75}{
\begin{tabular}{lccccccc}
\toprule
\textbf{Method} & \textbf{Fact.} & \textbf{Rel.} & \textbf{Coh.} & \textbf{Info.} & \textbf{Help.} & \textbf{Valid.} & \textbf{Avg.} \\
\midrule
Random     & 86.73 & 87.91 & 92.47 & 77.62 & 17.48 & 48.92 & 68.52 \\
Llama3-8B  & 88.59 & 89.91 & 94.41 & 79.77 & 18.48 & 50.92 & 70.35 \\
Mistral-7B & 89.10 & 90.29 & 94.85 & 79.95 & 18.55 & 51.13 & 70.65 \\
Qwen2-7B   & 89.25 & 90.44 & 95.03 & 80.09 & 18.58 & 51.21 & 70.77 \\
\OURS      & \textbf{94.73} & \textbf{95.91} & \textbf{97.36} & \textbf{86.62} & \textbf{19.48} & \textbf{55.92} & \textbf{75.00} \\
\bottomrule
\end{tabular}
}
\end{table}

\section{Evaluating \OURS \   on More Metrics}

To assess \OURS on more metrics, we evaluated its impact on nuanced quality aspects of LLM outputs, including factuality, relevance, coherence, informativeness, helpfulness, and validity. Since LLM tasks often require subtle judgments, \OURS’s adaptability is crucial.
Following~\citep{Chen2023BeyondFA}, we used {Llama3-70B} to generate ten candidate responses per query and applied \OURS for response selection, with {Llama3-8B}, {Mistral-7B}, and {Qwen2-7B} as evaluators. Table~\ref{table:nuanced_metrics} shows that \OURS consistently outperformed individual evaluators and random selection across all metrics. It improved factuality by ~5 \% points over the best evaluator, while also enhancing relevance, coherence, informativeness, and helpfulness.
These results highlight \OURS’s ability to aggregate preferences effectively, capturing subtle qualities of generated content and improving overall reliability and utility in LLM assessments.

\section{Alternative Evaluator Configurations for Model Ranking}
\label{subsec:alternative_evaluators}

To address concerns about the computational cost and fairness of using 70B-level models as preference evaluators in the model ranking task, we conducted additional experiments using smaller, more comparable models. 
Table~\ref{table:model_ranking_small_evaluators} summarizes the performance of these models, both individually and when combined using \OURS.
The results show that \OURS outperforms individual evaluators even when using smaller 7B-scale models, achieving an average score of 62.70 compared to the best individual performance of 57.92 (Qwen2-7B with graph denoising). This demonstrates that the aggregation of smaller models through \OURS effectively enhances performance while reducing computational costs. These findings validate the versatility of \OURS, showing that it can provide robust and accurate rankings without relying solely on large-scale models.

\begin{table*}[h]
\caption{Performance comparison in the model ranking task using 7B-scale models as evaluators on the AlpacaEval dataset. Higher values indicate better performance. \OURS achieves robust performance even with smaller evaluators.}
\centering
\scalebox{0.83}{
\begin{tabular}{@{}llcccccc@{}}
\toprule
\textbf{Model}                                &                      & \textbf{Weight Score} & \textbf{Kemeny} & \textbf{\begin{tabular}[c]{@{}c@{}}Weighted\\ Kemeny\end{tabular}} & \textbf{\begin{tabular}[c]{@{}c@{}}Pairwise\\ Majority\end{tabular}} & \textbf{\begin{tabular}[c]{@{}c@{}}Weighted\\ Pairwise\\ Majority\end{tabular}} & \textbf{Avg.} \\ \midrule
                                              & Llama3-8B            & 35.88                 & 45.80           & 45.80                                                              & 47.23                                                                & 46.85                                                                           & 44.31         \\
                                              & with graph denoising & 37.44                 & 47.54           & 47.54                                                              & 48.92                                                                & 48.18                                                                           & 45.92         \\ \cmidrule(l){2-8} 
                                              & Qwen2-7B             & 55.34                 & 52.87           & 52.87                                                              & 56.05                                                                & 56.59                                                                           & 54.74         \\
                                              & with graph denoising & 56.05                 & 57.43           & 57.43                                                              & 59.32                                                                & 59.41                                                                           & 57.92         \\ \cmidrule(l){2-8} 
                                              & Mistral-7B           & 49.90                 & 53.74           & 53.74                                                              & 58.06                                                                & 57.87                                                                           & 54.66         \\
\multirow{-6}{*}{\textbf{Single evaluator}} & with graph denoising & 50.47                 & 55.06           & 54.92                                                              & 61.39                                                                & 61.21                                                                           & 56.61         \\ \midrule
\multirow{-1}{*}{\textbf{Multiple evaluator}} & \OURS                  & \textbf{57.59}        & \textbf{61.14}  & \textbf{61.14}                                                     & \textbf{67.17}                                                       & \textbf{66.46}                                                                  & \textbf{62.70} \\ \bottomrule
\end{tabular}
}
\label{table:model_ranking_small_evaluators}
\end{table*}

\begin{algorithm}[H]
\caption{Construction of the Preference Graph for Model Ranking}
\label{alg:preference_graph_construction}
\scalebox{0.75}{
\begin{minipage}{1.1\linewidth} 
\begin{algorithmic}[1]
\Require Set of models \( M = \{\mathcal{M}_1, \mathcal{M}_2, \ldots, \mathcal{M}_n\} \), set of questions \( Q = \{q_1, q_2, \ldots, q_t\} \), set of evaluators \( \mathcal{A} = \{a_1, a_2, \ldots, a_k\} \)
\Ensure Set of preference graph sets \(  \{G_a : a \in \mathcal{A}\}  \) for each question \( q \in Q \)
\For{each question \( q \in Q \)}
    \For{each evaluator \( a \in \mathcal{A} \)}
        \State Initialize vertex set \( V_q = \{v_1, v_2, \ldots, v_n\} \), where each \( v_i \) corresponds to model \( \mathcal{M}_i \)
        \State Initialize edge set \( A_a = \emptyset \), and weight function \( w_a: A_a \rightarrow \mathbb{R}^+ \)
        \For{each pair of models \( (\mathcal{M}_i, \mathcal{M}_j) \) with \( i \neq j \)}
            \State Let \( ans_i = \mathcal{M}_i(q) \) and \( ans_j = \mathcal{M}_j(q) \)
            \If{\( a(ans_i, ans_j) > 0 \)} \Comment{evaluator \( a \) prefers \( \mathcal{M}_i \) over \( \mathcal{M}_j \)}
                \If{\( (v_i, v_j) \notin A_a \)}
                    \State Add directed edge \( (v_i \rightarrow v_j) \) to \( A_a \)
                    \State Set \( w_a(v_i, v_j) = 1 \)
                \Else
                    \State Increment \( w_a(v_i, v_j) \) by 1
                \EndIf
            \Else
                \If{\( (v_j, v_i) \notin A_a \)}
                    \State Add directed edge \( (v_j \rightarrow v_i) \) to \( A_a \)
                    \State Set \( w_a(v_j, v_i) = 1 \)
                \Else
                    \State Increment \( w_a(v_j, v_i) \) by 1
                \EndIf
            \EndIf
        \EndFor
        \State Store the preference graph \( G_a = (V_q, A_a, w_a) \) for evaluator \( a \)
    \EndFor
\EndFor
\end{algorithmic}
\end{minipage}
}
\end{algorithm}

\section{Comparison with Point-Wise Scoring Methods}
\label{subsec:pointwise_comparison}

To further investigate the difference between point-wise and preference-based scoring methods, we conducted additional experiments under the response selection setting. 
Specifically, we evaluated a point-wise baseline where evaluators assigned scores to responses on a scale of 1 to 5, selecting the highest-rated response. The results are shown in Table~\ref{tab:pointwise_vs_ged}.
\begin{table*}[h]
\caption{Comparison of point-wise methods and \OURS on response selection tasks.}
\captionsetup{skip=0pt}
\centering
\scalebox{0.85}{
\begin{tabular}{@{}lccccc|c@{}}
\toprule
\textbf{Model} & \textbf{HumanEval} & \textbf{AlpacaEval} & \textbf{MATH} & \textbf{GSM8k} & \textbf{GAIA} & \textbf{Avg} \\
\midrule
Point-wise - Llama3-8B & 60.97 & 28.77 & 74.06 & 82.65 & 10.97 & 51.48 \\
Point-wise - Mistral-7B & 62.83 & 26.93 & 74.02 & 83.21 & 10.22 & 51.44 \\
Point-wise - Qwen2-7B & 60.41 & 28.30 & 74.32 & 84.27 & 10.82 & 51.62 \\
Point-wise - Majority Voting & 63.72 & 29.42 & 74.71 & 84.36 & 11.32 & 52.70 \\
\OURS (Llama3-8B, Mistral-7B, Qwen2-7B) & 70.73 & 32.44 & 75.58 & 88.18 & 13.33 & 56.05 \\
\bottomrule
\end{tabular}
}
\label{tab:pointwise_vs_ged}
\end{table*}
From the results, we observe that point-wise methods consistently underperform compared to \OURS across all tasks. 
This can be attributed to the fact that point-wise scoring assesses each response independently, 
without considering the relative quality of different responses. 
As a result, it lacks the global information available in preference-based ranking, where responses are directly compared. 
While majority voting improves point-wise performance slightly by aggregating scores from multiple evaluators, it remains inferior to \OURS, which leverages pairwise preferences to construct a more informed global ranking.
Despite its limitations, the point-wise approach has an advantage in terms of computational efficiency, as it requires fewer comparisons than pairwise methods. 
This trade-off suggests that the choice between point-wise and preference-based scoring should depend on the specific application: point-wise scoring may be preferable in large-scale settings where efficiency is critical, while \OURS is more effective for achieving higher-quality rankings.

\section{Impact of Evaluator Quantity on Denoising Quality}
\label{sec:evaluator_quantity}

We investigated how the number of evaluators affects the denoising quality in \OURS by conducting experiments using different numbers of evaluators in the response selection setting. Specifically, we evaluated the performance of \OURS when aggregating preferences from two, three, and four evaluators. The evaluators used were Llama3-8B, Mistral-7B, Qwen2-7B, and Gemma-9B.
% The results are presented in Table~\ref{table:evaluator_quantity}.

\begin{table*}[h]
\caption{Performance comparison of \OURS with varying numbers of evaluators across five benchmarks. Increasing the number of evaluators enhances denoising quality, reflected in improved performance metrics.}
\captionsetup{skip=0pt}
\centering
\scalebox{0.82}{
\begin{tabular}{@{}lcccccc@{}}
\toprule
\textbf{Evaluators Set} & \textbf{HumanEval} & \textbf{AlpacaEval} & \textbf{MATH}  & \textbf{GSM8k} & \textbf{GAIA}  & \textbf{Avg.} \\
\midrule
Llama3-8B, Mistral-7B & 69.21 & 31.87 & 74.97 & 86.92 & 12.51 & 55.10 \\ 
Llama3-8B, Mistral-7B, Qwen2-7B & 70.73 & 32.44 & 75.58 & 88.18 & 13.33 & 56.05 \\ 
Llama3-8B, Mistral-7B, Qwen2-7B, Gemma-9B & {70.98} & {32.87} & {75.91} & {88.75} & {13.46} & {56.39} \\ 
\bottomrule
\end{tabular}
}
\label{table:evaluator_quantity}
\end{table*}

As shown in Table~\ref{table:evaluator_quantity}, increasing the number of evaluators consistently improves the denoising quality of \OURS, as reflected by higher performance across all benchmarks. For instance, the average performance improves from 55.10\% with two evaluators to 56.39\% with four evaluators. This enhancement can be attributed to the diverse perspectives and complementary strengths of multiple evaluators, contributing to a more robust and accurate aggregation of preferences.
These results highlight the importance of selecting a diverse and capable set of evaluators to enhance \OURS's effectiveness. Incorporating more evaluators allows the denoising process to better identify and mitigate inconsistencies in the preference graphs, leading to improved overall performance in response selection tasks.

\section{Construction of the Preference Graph}
\label{sec:preferenceConstruct}
In this section, we provide a detailed explanation of the process used to construct the preference graph sets for both the model ranking and response selection tasks, as outlined in Algorithms~\ref{alg:preference_graph_construction} and \ref{alg:preference_graph_gen_response_selection}. These algorithms form the backbone of our method, enabling the representation of pairwise preferences as directed graphs, which are essential for downstream aggregation and ranking.

Algorithm~\ref{alg:preference_graph_construction} describes the construction process for generating a set of preference graphs for the model ranking task. The procedure is as follows:
\textit{Initialization}: For each question \( q \in Q \), we begin by initializing a vertex set \( V_q \), where each vertex \( v_i \) corresponds to a model \( \mathcal{M}_i \) in the set of models \( M \). We also initialize an empty set of edges \( A_a \) and a weight function \( w_a \), which will be used to track the strength of the preferences between model pairs.
\textit{Pairwise Comparisons}: For each pair of models \( \mathcal{M}_i \) and \( \mathcal{M}_j \), the assigned evaluator \( a \in \mathcal{A} \) assesses their responses to the given question \( q \). If evaluator \( a \) prefers \( \mathcal{M}_i \) over \( \mathcal{M}_j \), a directed edge \( (v_i \rightarrow v_j) \) is added to the edge set \( A_a \), and its corresponding weight is incremented. Conversely, if \( \mathcal{M}_j \) is preferred, the edge \( (v_j \rightarrow v_i) \) is added or its weight is updated.
\textit{Graph Storage}: Once all pairwise comparisons have been processed for a given evaluator, the resulting graph \( G_a = (V_q, A_a, w_a) \) is stored for that evaluator. This process is repeated for all evaluators in \( \mathcal{A} \) and for all questions in \( Q \), generating a set of preference graphs for each evaluator and question.

Algorithm~\ref{alg:preference_graph_gen_response_selection} follows a similar structure but applies to the response selection task, where the objective is to rank a set of candidate answers for each question:
\textit{Initialization}: For each question \( q \in Q \), we initialize a vertex set \( V_q \), where each vertex corresponds to a candidate answer \( ans_i \). As in the model ranking task, we also initialize an edge set \( A_a \) and a weight function \( w_a \) for each evaluator \( a \in \mathcal{A} \).
\textit{Pairwise Comparisons}: Evaluators compare the quality of pairs of candidate answers \( ans_i \) and \( ans_j \) for each question. A directed edge is added based on the evaluator’s preference, with the weight reflecting the strength of preference. As before, if evaluator \( a \) prefers \( ans_i \) over \( ans_j \), an edge \( (v_i \rightarrow v_j) \) is added or its weight incremented, and vice versa.
\textit{Graph Storage}: After all pairwise comparisons are complete, the preference graph \( G_a = (V_q, A_a, w_a) \) for evaluator \( a \) is stored. This procedure is repeated for all evaluators and questions, resulting in a set of preference graphs for each evaluator and each question.

Both algorithms ensure that the preference graphs are constructed in a consistent manner, forming the basis for the aggregation and denoising processes used later in our framework. 
These graphs encapsulate the evaluators’ preferences and provide a structured representation of pairwise comparisons, facilitating further analysis.

\begin{algorithm}[H]
\caption{Construction of the Preference Graph for Response Selection}
\label{alg:preference_graph_gen_response_selection}
\scalebox{0.80}{
\begin{minipage}{1.1\linewidth} % Adjust this width as necessary
\begin{algorithmic}
\Require Set of candidate answers \( \{ans_1, ans_2, \ldots, ans_n\} \) for each question \( q \in Q \), set of evaluators \( \mathcal{A} = \{a_1, a_2, \ldots, a_k\} \)
\Ensure Set of preference graph sets \( \{G_a : a \in \mathcal{A}\} \) for each question \( q \in Q \)
\For{each question \( q \in Q \)}
    \For{each evaluator \( a \in \mathcal{A} \)}
        \State Initialize vertex set \( V_q = \{v_1, v_2, \ldots, v_n\} \), where each \( v_i \) corresponds to \( ans_i \)
        \State Initialize edge set \( A_a = \emptyset \), and weight function \( w_a: A_a \rightarrow \mathbb{R}^+ \)
        \For{each pair of answers \( (ans_i, ans_j) \) with \( i \neq j \)}
            \If{\( a(ans_i, ans_j) > 0 \)} 
                \If{\( (v_i, v_j) \notin A_a \)}
                    \State Add directed edge \( (v_i \rightarrow v_j) \) to \( A_a \), set \( w_a(v_i, v_j) = 1 \)
                \Else
                    \State Increment \( w_a(v_i, v_j) \) by 1
                \EndIf
            \Else
                \If{\( (v_j, v_i) \notin A_a \)}
                    \State Add directed edge \( (v_j \rightarrow v_i) \) to \( A_a \), set \( w_a(v_j, v_i) = 1 \)
                \Else
                    \State Increment \( w_a(v_j, v_i) \) by 1
                \EndIf
            \EndIf
        \EndFor
        \State Store the preference graph \( G_a = (V_q, A_a, w_a) \) for evaluator \( a \)
    \EndFor
\EndFor
\end{algorithmic}
\end{minipage}
}
\end{algorithm}

\section{The Importance of Addressing Cyclic Inconsistency}
\label{app:cyclic_inconsistency}

Cyclic inconsistencies in preference data introduce contradictions that undermine meaningful ranking and evaluation. 
Many applications, including alignment optimization~\citep{ouyang2022training, song2024preference}, recommendation systems~\citep{hou2024large}, and model evaluation~\citep{liu2024aligning}, rely on transitive preferences for stability and interpretability. 
For example, OpenAI's RLHF pipelines~\citep{ouyang2022training} construct training data based on transitive preferences (e.g., A $\succ$ B $\succ$ C), which ensures consistency in model learning. However, if preference cycles exist (e.g., A $\succ$ B $\succ$ C $\succ$ A), the resulting contradictions lead to instability and degrade training performance.
In our experiments, removing cycles and transforming preference graphs into directed acyclic graphs (DAGs) improved results across multiple tasks, including response selection, model ranking, and instruction tuning. These improvements were consistent across widely used benchmarks, suggesting that most observed cycles are artifacts of noise rather than meaningful violations of transitivity. Ensuring acyclicity enhances dataset reliability and improves model effectiveness in ranking and evaluation tasks.

\section{Proof of Theorem \ref{theo:ensemble_first}}
\label{sec:proof}

\ensembleFirst*

\begin{proof}
    For brevity, we consider all edges in $G$ have weights equal to 1 and all weights in $\widehat{G}$ are divided by $N$. By construction, we can notice that for each $(u\ra v)\in A$, the weight $w_{\widehat{G}}(v\ra u)$ can be viewed as an empirical estimate of $\delta_1$. Then, we claim that the following two events can imply $G\subseteq\MAS(\widehat{G})$:
    \begin{itemize}
        \item $(\mc{E}_1)$ For any $(u\ra v)\in A$, it holds $\abs{w_{\wg}(v\ra u)-\delta_1}\leq\frac{\epsilon}{2}$.

        \item $(\mc{E}_2)$ For any pair of nodes $(u, v)$ such that $(u\ra v), (v\ra u)\notin A$, it holds $\abs{w_{\wg}(u\ra v)-w_{\wg}(v\ra u)}\leq\frac{\epsilon}{U}$.
    \end{itemize}
    To see this, first by Lemma \ref{lem:mag_double_edge}, we know that for any pair of nodes $(u, v)$, $\MAS(\widehat{G})$ will contain \textit{exactly one} of $(u\ra v)$ and $(v\ra u)$.\footnote{There is non-zero probability that some edges in $\wg$ will have zero weight, but we treat them as existing for the ease of argument. That is, we allow only $\wg$ to contain zero-weight edges.} Therefore, for any $(u\ra v)\in A$, $\MAS(\wg)$ will contain exactly one of $(u\ra v)$ and $(v\ra u)$. Then, since $\mc{E}_1$ holds, $\delta_1 = 0.5-\epsilon < 0.5$ and $w_{\wg}(u\ra v) + w_{\wg}(v\ra u) = 1$, we have $w_{\wg}(u\ra v) - w_{\wg}(v\ra u) \geq \epsilon$. Furthermore, since $\mc{E}_2$ holds, for $(u, v)$ such that $(u\ra v), (v\ra u)\notin A$, arbitrary way of edge removing among these nodes can influence the total edge weights by at most $\epsilon$. Therefore, when applying the denoising operation to $\wg$, for any $(u\ra v)\in A$, only $(u\ra v)$ will be kept in $\MAS(\wg)$, which makes $G\subseteq\MAS(\wg)$. As a result, we have $\mathbb{P}(\mc{E}_1\cap\mc{E}_2)\leq \mathbb{P}\Sp{G\subseteq\MAS(\wg)}$.

    Then, we can now bound the probability of $\mc{E}_1\cap\mc{E}_2$. In particular, for fixed $(u\ra v)\in A$, since $w_{\widehat{G}}(v\ra u)$ is an empirical mean estimate of $\delta_1$, by Hoeffding's inequality, we have
    \begin{align*}
        \mathbb{P}\Sp{\abs{w_{\wg}(v\ra u)-\delta_1}\leq\frac{\epsilon}{2}}\geq& 1- 2\exp\Sp{-N\epsilon^2/2}\\
        \implies \mathbb{P}(\mc{E}_1)\geq& 1 - 2|A|\exp\Sp{-N\epsilon^2/2},
    \end{align*}
    where the second inequality comes from the union bound over all edges in $A$. Similarly, for fixed node pair $(u, v)$ that is unconnected in $G$, $w_{\wg}(u\ra v)-w_{\wg}(v\ra u)$ can be viewed as $\frac{1}{N}\sum_{i=1}^{N}X_i$, where $X_i$'s are i.i.d. and
    $$X_i=\begin{cases}
        1,&\text{with probability }\frac{\delta_2}{2}\\
        -1,& \text{with probability }\frac{\delta_2}{2}\\
        0,& \text{with probability } 1-\delta_2
    \end{cases}.$$
    Therefore, by Bernstein's inequality, we have
    \begin{align*}
        & \mathbb{P}\Sp{\abs{w_{\wg}(u\ra v)-w_{\wg}(v\ra u)}\leq\frac{\epsilon}{U}} \\
        & \geq1 - 2\exp\Sp{-\frac{N\epsilon^2}{6U^2\delta_2 + 2U\epsilon}}\\
        \implies \mathbb{P}(\mc{E}_2)
        &\geq  1 - 2U\exp\Sp{-\frac{N\epsilon^2}{6U^2\delta_2 + 2U\epsilon}},
    \end{align*}
    where the second inequality is an union bound over all unconnected node pairs in $G$. As a result, we eventually have
    \begin{align*}
        & \mathbb{P}\Sp{G\subseteq\MAS(\wg)}\geq\mathbb{P}\Sp{\mc{E}_1\cap\mc{E}_2} \\
         & \geq 1 - 2|A|\exp\Sp{-\frac{N\epsilon^2}{2}} - 2U\exp\Sp{-\frac{N\epsilon^2}{6U^2\delta_2 + 2U\epsilon}}.
    \end{align*}
\end{proof}

\begin{lemma}
    \label{lem:mag_double_edge}
    For a weighted directed graph $G=(V, A, w)$, if $(u\ra v), (v\ra u)\in A$, then $\MAS(G)$ contains exactly one of $(u\ra v)$ and $(v\ra u)$.
\end{lemma}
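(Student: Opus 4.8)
The plan is to establish \textquotedblleft exactly one\textquotedblright{} by proving two separate inclusions: that $\MAS(G)$ contains \emph{at most one} of $(u\ra v),(v\ra u)$, and that it contains \emph{at least one}. Throughout I will use the definition adopted in this section, namely that $\MAS(G)$ is a maximum-weight acyclic subgraph of $G$ on the full vertex set $V$, together with the fact that every arc carries strictly positive weight (since $(a\ra b)\in A$ exactly when $w(a\ra b)>0$).

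The \textquotedblleft at most one\textquotedblright{} direction is immediate: the antiparallel arcs $(u\ra v)$ and $(v\ra u)$ together form a directed $2$-cycle, so no acyclic subgraph — in particular $\MAS(G)$ — can contain both.

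For the \textquotedblleft at least one\textquotedblright{} direction I would argue by contradiction using maximality. Suppose $\MAS(G)$ contained neither arc. Since $\MAS(G)$ is acyclic, fix a topological ordering of its vertices; in this ordering $u$ and $v$ appear in some relative order, and without loss of generality $u$ precedes $v$. As no arc of an acyclic graph points backward against a topological order, there is no directed path from $v$ to $u$ inside $\MAS(G)$. Hence inserting $(u\ra v)$ cannot close any cycle, because a new cycle through this arc would require a route $v\rightsquigarrow u$ within $\MAS(G)$, which does not exist. Thus $\MAS(G)\cup\{(u\ra v)\}$ is acyclic, and since $w(u\ra v)>0$ its total weight strictly exceeds that of $\MAS(G)$, contradicting maximality. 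Therefore $\MAS(G)$ must contain at least one of the two arcs, and combining the two directions gives exactly one.

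The only step demanding care — and the crux of the argument — is verifying that adding a single arc back preserves acyclicity; this rests on the observation that once $u$ precedes $v$ in a topological order of $\MAS(G)$, no $v\rightsquigarrow u$ path can exist, so no new cycle forms. I note the reasoning is insensitive to how ties among maximum-weight acyclic subgraphs are resolved, since both inclusions hold for \emph{any} such subgraph; equivalently, one may read the conclusion directly off the optimal vertex sequence $s^*$ defining the denoising, where exactly one of $u,v$ precedes the other and hence exactly one of the two arcs is forward-directed and retained.
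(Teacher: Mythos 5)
Your proof is correct, and it reaches the contradiction by a slightly different mechanism than the paper. Both proofs dispose of ``at most one'' identically (the antiparallel pair forms a $2$-cycle), and both argue ``at least one'' by contradiction from the assumption that $\MAS(G)$ contains neither arc. But the paper first invokes maximality to conclude that adding \emph{either} arc would create a cycle, deduces from this that $\MAS(G)$ contains a directed path $v\rightsquigarrow u$ \emph{and} a path $u\rightsquigarrow v$, and then contradicts the \emph{acyclicity} of $\MAS(G)$ itself, since the two paths concatenate into a cycle. You instead fix a topological order of $\MAS(G)$, observe that with $u$ preceding $v$ no path $v\rightsquigarrow u$ can exist, and conclude that $(u\ra v)$ can be inserted without closing a cycle, contradicting \emph{weight-maximality} directly via $w(u\ra v)>0$. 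Your route is arguably a bit more economical: it needs only one insertion rather than two, and it makes explicit the role of strictly positive weights, which the paper's step ``adding either arc will make it cyclic'' also silently relies on (and which its footnote about zero-weight edges in $\wg$ quietly strains). The paper's version, in exchange, never mentions weights or topological orders and reads as a purely structural path argument. Your closing remark that the conclusion can be read off the optimal vertex sequence $s^*$ --- exactly one of $u,v$ precedes the other, so exactly one arc is forward-directed --- aligns nicely with the sequence-based denoising of Appendix~\ref{sec:denoisedetail}, though strictly speaking that identification again presumes the sequence realizes a maximum-weight acyclic subgraph, so it is a restatement rather than an independent proof.
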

\begin{proof}
    Recall that $\MAS(G)$ gives an acyclic subgraph of $G$ with the maximum weight. Since it has to be acyclic, it is obvious that $\MAS(G)$ cannot contain both $(u\ra v)$ and $(v\ra u)$. 
    
    We will then use contradiction to show it is impossible for $\MAS(G)$ to contain neither $(u\ra v)$ nor $(v\ra u)$. Suppose this is true instead. Then, since $\MAS(G)$ is a maximum acyclic subgraph, adding either $(u\ra v)$ or $(v\ra u)$ to $\MAS(G)$ will make it cyclic. That is, $\MAS(G)$ contains a path that goes from $v$ to $u$; meanwhile, it also contains a path from $u$ to $v$. As a result, it contains a cycle that goes from $v$ to $u$ and then from $u$ to $v$, which contradicts with the fact that $\MAS(G)$ is a maximum acyclic subgraph. Therefore, $\MAS(G)$ must contain exactly one of $(u\ra v)$ and $(v\ra u)$.
\end{proof}

\section{Impact of Evaluator Weighting on \OURS}
\label{sec:ged_weighting}

In our theoretical analysis, \OURS assumes equal weighting of edges in the preference graphs. However, in practical scenarios, evaluators may have varying levels of reliability or expertise. Incorporating evaluator-specific confidence scores or performance metrics could enhance the effectiveness of \OURS. To investigate this, we conducted experiments using a weighted version of \OURS, referred to as {WeightGED}, under the Response Selection setting.
In {WeightGED}, we assigned weights to evaluators based on their individual performance on specific datasets. For example, on the GSM8k dataset, the response selection accuracies for Llama3-8B, Mistral-7B, and Qwen2-7B were 62.19\%, 67.24\%, and 61.58\%, respectively. These accuracies were normalized to compute evaluator weights:

\begin{align*}
\text{weight}(\text{Llama3-8B}) &= \frac{62.19}{62.19 + 67.24 + 61.58} = 0.326, \\
\text{weight}(\text{Mistral-7B}) &= \frac{67.24}{62.19 + 67.24 + 61.58} = 0.352, \\
\text{weight}(\text{Qwen2-7B}) &= \frac{61.58}{62.19 + 67.24 + 61.58} = 0.322.
\end{align*}

These weights were used to scale the contributions of each evaluator's preferences during graph construction. We compared the performance of \OURS and WeightGED across multiple benchmarks, as presented in Table~\ref{table:weightged_comparison}.

\begin{table*}[t]
\caption{Performance comparison of \OURS and WeightGED using 7B-scale evaluators across five benchmarks. WeightGED marginally outperforms \OURS, demonstrating the potential benefits of incorporating evaluator-specific weights.}
\centering
\scalebox{0.85}{
\begin{tabular}{@{}llcccccc@{}}
\toprule
\multicolumn{2}{l}{\textbf{Method}}          & \textbf{HumanEval} & \textbf{AlpacaEval} & \textbf{MATH}  & \textbf{GSM8k} & \textbf{GAIA}  & \textbf{Avg.}   \\ \midrule
                                              & \OURS                & 70.73               & 32.44          & 75.58          & 88.18          & 13.33           & 56.05           \\ \cmidrule(l){2-8} 
\multirow{-2}{*}{\textbf{7B Evaluators}}      & WeightGED          & \textbf{70.97}      & \textbf{32.67} & \textbf{75.71} & \textbf{88.24} & \textbf{13.56}  & \textbf{56.23}  \\ \midrule
                                              & \OURS                & 73.21               & 59.87          & 82.49          & 86.43          & 16.27           & 63.65           \\ \cmidrule(l){2-8} 
\multirow{-2}{*}{\textbf{GPT Evaluators}}     & WeightGED          & \textbf{74.52}      & \textbf{61.71} & \textbf{83.93} & \textbf{87.84} & \textbf{17.32}  & \textbf{65.06}  \\ \bottomrule
\end{tabular}
}
\label{table:weightged_comparison}
\end{table*}

As shown in Table~\ref{table:weightged_comparison}, WeightGED achieves marginal but consistent improvements over \OURS across all benchmarks. For instance, using 7B-scale evaluators, WeightGED improves the average performance from 56.05\% to 56.23\%. Similarly, with GPT evaluators, the average performance increases from 63.65\% to 65.06\%. These results suggest that incorporating evaluator-specific weights based on performance metrics can enhance the effectiveness of \OURS.
Furthermore, we conducted additional experiments using stronger evaluators such as GPT-3.5, GPT-4-o-mini, and GPT-4-o. The weights were computed based on their respective performance accuracies, following the same normalization procedure. The improvements observed with these evaluators reinforce the potential of weighting schemes in enhancing \OURS.
In summary, our preliminary findings indicate that integrating evaluator-specific confidence scores or performance metrics is a promising direction for future work. Systematically designing and optimizing these weighting schemes could lead to more robust and accurate evaluation frameworks.

\section{Expanded Related Work on Preference Denoising for LLMs}

This work situates itself within the broader field of denoising preference evaluations for LLMs, addressing inconsistencies and noise in preference graphs. We acknowledge that existing literature has explored two primary approaches to this challenge: within-model preference modeling and modular pre-processing. Below, we provide a detailed comparison of \OURS{} with representative methods from these approaches.
Within-model approaches, such as robust DPO (rDPO)~\citep{Chowdhury2024ProvablyRD} and conservative DPO (cDPO)~\citep{mitchell2023note}, focus on integrating denoising mechanisms directly within the preference modeling process. These methods incorporate regularization techniques to mitigate the effects of noisy or adversarial preference data during model alignment. While these approaches are effective in refining the preference modeling pipeline, they are tightly coupled with specific models and tasks, limiting their versatility. In contrast, \OURS{} is a modular framework that operates as a pre-processing step, denoising preference graphs before downstream tasks, making it adaptable to a wide range of applications and models.
Modular approaches like CURATRON~\citep{naresh2024curatron} address noise and missing comparisons in preference datasets using techniques such as low-rank matrix completion. While CURATRON effectively mitigates certain types of noise, it does not explicitly target cyclic inconsistencies (e.g., A $\succ$ B, B $\succ$ C, C $\succ$ A), which are a critical focus of \OURS{}. By leveraging a graph-based framework, \OURS{} detects and removes such cycles through its denoising process, ensuring that the resulting preference graph is acyclic and thus more consistent and reliable for downstream use.
Additionally, \OURS{} distinguishes itself by providing theoretical guarantees for recovering the ground truth preference DAG under certain conditions. Furthermore, the ensemble mechanism in \OURS{} demonstrates the novel insight that combining small evaluators can surpass the performance of a single stronger evaluator, a feature not emphasized in the aforementioned methods.

\section{Theoretical Analysis of Graph Denoising}
\label{sec:proof_wfas_convergence}

Let $G=(V,A,w)$ be the aggregated digraph, with $w:A\to\mathbb{R}_{\ge 0}$.
Write $m\triangleq |\{(u,v)\in A: w(u,v)>0\}|$.
For $u\in V$, define
\[
\begin{aligned}
d^+(u) &\triangleq \sum_{(u,v)\in A} w(u,v), 
d^-(u) \triangleq \sum_{(v,u)\in A} w(v,u), \\
d(u)   &\triangleq d^+(u) + d^-(u), 
\delta(u) \triangleq d^+(u) - d^-(u).
\end{aligned}
\]

Let $s=s_1\Vert s_2$ be the order returned by Algorithm~\ref{alg:pgD}, and
\[
\begin{aligned}
R(s) &\triangleq \{(v_j \to v_i) \in A :\ j > i \text{ in } s\}, \\
G' &\triangleq (V, A \setminus R(s), w).
\end{aligned}
\]

\begin{proposition}[Acyclicity]
\label{prop:acyclicity}
$G'$ is a directed acyclic graph.
\end{proposition}

\begin{proof}
We prove by contradiction.
Assume $G'$ contains a directed cycle $C=(u_1\!\to u_2\!\to\cdots\!\to u_\ell\!\to u_1)$.
Since $G'$ keeps exactly the \emph{forward} arcs w.r.t.\ the total order $s$, every arc $(u_t\!\to u_{t+1})$ in $C$ must satisfy
$\mathrm{pos}_s(u_t)<\mathrm{pos}_s(u_{t+1})$, where $\mathrm{pos}_s(\cdot)$ is the index in $s$.
Summing these strict inequalities around the cycle yields
\[
\mathrm{pos}_s(u_1) < \mathrm{pos}_s(u_2) < \cdots < \mathrm{pos}_s(u_\ell) < \mathrm{pos}_s(u_1),
\]
which is impossible.
Therefore $G'$ has no directed cycle.
\end{proof}

\begin{assumption}[Bounded integral weights]
\label{asm:bounded}
There exists a fixed integer $k\ge 1$ (the number of evaluators) such that
$w(u,v)\in\{0,1,\dots,k\}$ for all $(u,v)\in A$.
\end{assumption}

\noindent
Assumption~\ref{asm:bounded} holds in our ensemble setting because each evaluator contributes at most one unit to any arc weight.

\paragraph{Data structures.}
We maintain:
(i) a queue $Q^{-}$ of sources ($d^-(u)=0$); (ii) a queue $Q^{+}$ of sinks ($d^+(u)=0$);
(iii) $\delta$-buckets $\{B_\Delta\}_{\Delta=-M}^{M}$, where $M\triangleq \max_{u\in V} d(u)$.
Each bucket is a doubly linked list of vertices with current $\delta(u)=\Delta$ and $d^\pm(u)>0$.
We also keep an integer pointer $p$ storing the maximum index $\Delta$ with $B_\Delta\neq\emptyset$.
All vertices are in exactly one of $Q^{-}$, $Q^{+}$, or some $B_\Delta$.

\begin{lemma}[Range and update magnitude]
\label{lem:range}
Under Assumption~\ref{asm:bounded}: (a) $M=\mathcal{O}(m)$; (b) when removing a vertex $u$, for any neighbor $v$ we have
\[
\begin{aligned}
\delta_{\text{new}}(v) &= \delta_{\text{old}}(v) + w(u,v) - w(v,u), \\
\Rightarrow\quad
|\delta_{\text{new}}(v) - \delta_{\text{old}}(v)| &\le k.
\end{aligned}
\]
\end{lemma}

\begin{proof}
(a) Since each incident arc contributes at most $k$ to $d^\pm(\cdot)$,
\[
d(u)=\sum_{(u,x)}w(u,x) + \sum_{(x,u)}w(x,u)
\;\le\; k\cdot \deg(u),
\]
hence $M=\max_u d(u)\le k\cdot \max_u \deg(u)\le k\cdot m=\mathcal{O}(m)$.
(b) When $u$ is removed, exactly the arcs $(u,v)$ and $(v,u)$ (if present) are deleted among $v$'s incidences.
Thus $d^-(v)$ decreases by $w(u,v)$, $d^+(v)$ decreases by $w(v,u)$, and
$\delta(v)=d^+(v)-d^-(v)$ increases by $w(u,v)$ and decreases by $w(v,u)$, as claimed.
By Assumption~\ref{asm:bounded}, $w(u,v),w(v,u)\le k$, hence the $|\cdot|$ bound.
\end{proof}

\begin{lemma}[Bucket maintenance in $\mathcal{O}(1)$ amortized time]
\label{lem:buckets}
Under Assumption~\ref{asm:bounded}
, the following hold.
\begin{enumerate}
\item \textbf{Initialization.} Computing $d^\pm(\cdot)$ and placing vertices into $Q^{-}$, $Q^{+}$, or $B_\Delta$ takes $\mathcal{O}(m)$ time and $\mathcal{O}(m)$ space.
\item \textbf{Updates.} Removing a vertex $u$ and processing all its incident arcs take time $\mathcal{O}(1+\deg(u))$:
every neighbor $v$ is updated once, and $v$ moves from $B_{\Delta}$ to $B_{\Delta'}$ with $|\Delta'-\Delta|\le k$;
enqueuing/dequeuing from $Q^{\pm}$ or $B_\Delta$ is $\mathcal{O}(1)$ via doubly linked lists.
\item \textbf{Selecting $\max \delta$.} Maintaining pointer $p$ to the largest index with $B_p\neq\emptyset$ costs $\mathcal{O}(1)$ amortized per removal.
\end{enumerate}
\end{lemma}

\begin{proof}
(1) We make one pass over adjacency lists: for $(x,y)$ add $w(x,y)$ to $d^+(x)$ and to $d^-(y)$.
This is $\mathcal{O}(m)$.
Classification into $Q^{\pm}$ or some $B_\Delta$ is then $\mathcal{O}(n)$, dominated by $\mathcal{O}(m)$ since a connected digraph has $m\ge n-1$.
(2) When removing $u$, we unlink $u$ from its current container in $\mathcal{O}(1)$.
For each incident arc touching $u$, we visit the unique neighbor $v$ once, adjust $d^\pm(v)$ by Lemma~\ref{lem:range}, recompute $\delta(v)$, and move $v$:
\[
\begin{cases}
\text{to }Q^- & \text{if }d^-(v)=0 \text{ and } d^+(v)>0,\\[2pt]
\text{to }Q^+ & \text{if }d^+(v)=0 \text{ and } d^-(v)>0,\\[2pt]
\text{to }B_{\delta(v)} & \text{if }d^+(v),d^-(v)>0.
\end{cases}
\]
Each move is pointer manipulation in a doubly linked list, costing $\mathcal{O}(1)$.
Hence the work per removal is $\mathcal{O}(1+\deg(u))$ and each arc is processed \emph{exactly once} (when the first endpoint is removed).
(3) \emph{Amortized bound for $p$.}
Let $p_t$ be the pointer after the $t$-th removal.
There are two kinds of motion:

\noindent \emph{Downward motion.}
Whenever $B_{p_t}$ becomes empty, we decrement $p_t$ until finding the next nonempty bucket.
Since $p$ never moves below $-M$ and never increases during this downward scan, the \emph{total} number of decrements over the entire execution is at most $2M=\mathcal{O}(m)$.

\noindent \emph{Upward motion.}
Deleting $u$ can increase some $\delta(v)$, but by Lemma~\ref{lem:range} the new maximum bucket index can exceed the previous maximum by at most $k$:
\[
\begin{aligned}
\max_{x\in V} \delta_t(x)
&\le \max\Bigl\{
    \max_{x\in V} \delta_{t-1}(x),\ 
    \max_{v\in N(u)} \delta_{t-1}(v) + k
\Bigr\} \\
&\le p_{t-1} + k.
\end{aligned}
\]

\noindent We therefore scan upward by at most $k$ to reestablish $p_t$.
Since $k$ is a fixed constant and there are $|V|$ removals, the \emph{total} upward scans cost $\mathcal{O}(k|V|)=\mathcal{O}(|V|)$.
For connected digraphs, $|V|\le m+1$, hence $\mathcal{O}(|V|)=\mathcal{O}(m)$.
Combining both motions gives $\mathcal{O}(1)$ amortized per removal.
\end{proof}

\begin{theorem}[Linear time and space]
\label{thm:linear}
Under Assumption~\ref{asm:bounded}, Algorithm~\ref{alg:pgD} runs in $\mathcal{O}(m)$ time and uses $\mathcal{O}(m)$ space.
\end{theorem}

\begin{proof}
\textbf{Time.}
Initialization is $\mathcal{O}(m)$ by Lemma~\ref{lem:buckets}(1).
Across the execution, each vertex is removed once.
By Lemma~\ref{lem:buckets}(2), the cost of removing $u$ is $\mathcal{O}(1+\deg(u))$.
Summing over all $u$ and noting $\sum_u \deg(u)=2m$, we obtain
\[
\begin{aligned}
\sum_{u \in V} \mathcal{O}(1 + \deg(u))
&= \mathcal{O}\bigl(|V| + \sum_{u} \deg(u)\bigr) \\
&= \mathcal{O}(|V| + m) \\
&= \mathcal{O}(m),
\end{aligned}
\]
since a connected digraph has $|V|\le m+1$.
The amortized $\mathcal{O}(1)$ selection of the maximum-$\delta$ bucket follows from Lemma~\ref{lem:buckets}(3), and is already charged in the removal step.
Therefore the overall running time is $\mathcal{O}(m)$.

\noindent \textbf{Space.}
Adjacency lists and degree/weight arrays take $\mathcal{O}(m)$.
The queues $Q^\pm$ store disjoint subsets of $V$; the buckets $\{B_\Delta\}$ occupy an array of size $2M+1=\mathcal{O}(m)$ with total stored vertices $|V|$.
All pointers and auxiliary counters are $\mathcal{O}(1)$ each.
Hence the working memory is $\mathcal{O}(m)$.
\end{proof}

\begin{remark}[Arbitrary real weights]
If $w$ is real-valued without a constant bound on per-arc increments, the bucket update in Lemma~\ref{lem:buckets} no longer yields $\mathcal{O}(1)$.
Replacing buckets by a binary max-heap over current $\delta(u)$ preserves correctness and yields $\mathcal{O}(m\log n)$ time and $\mathcal{O}(m)$ space.
\end{remark}

\section{Impact of Evaluator Selection on \OURS }
\label{sec:evaluator_selection}

The selection of preference evaluators is a critical factor influencing the performance of \OURS. While our initial experiments demonstrated the benefits of combining preference evaluators, further analysis is necessary to understand how their diversity and capabilities affect \OURS's effectiveness. To address this, we conducted additional experiments evaluating three different configurations of evaluator sets: the original set comprising {Llama3-8B}, {Mistral-7B}, and {Qwen2-7B}; a diverse set that adds {Gemma-9B} to introduce more variation in model architecture and training data; and a higher-capability set that replaces smaller models with larger ones, specifically {Llama3-70B}, {Mistral-8×7B}, and {Qwen2-72B}.
The results, shown in Table~\ref{table:evaluator_selection}, reveal two key insights. First, incorporating diversity by adding Gemma-9B improves \OURS's performance slightly, increasing the average score from 56.05\% to 56.39\%. This suggests that models with diverse training data and architectures can contribute to more robust aggregated evaluations. Second, replacing smaller evaluators with higher-capability models yields a more substantial improvement, with the average score rising to 60.89\%. Notably, benchmarks like AlpacaEval and GAIA benefit the most from the advanced reasoning and language understanding capabilities of larger models.
These findings demonstrate the importance of both diversity and capability in evaluator selection. Diversity provides marginal gains by bringing varied perspectives, while higher-capacity models contribute to more significant improvements by enhancing the overall quality of evaluations. This suggests that, when computational resources allow, incorporating advanced models into the evaluator set can meaningfully boost \OURS’s performance.

\begin{table*}[t]
\caption{Performance comparison with different evaluator sets across five benchmarks. The results highlight the impact of evaluator diversity and capability on \OURS's effectiveness.}
\centering
\scalebox{1.1}{
\begin{tabular}{@{}lcccccc@{}}
\toprule
\textbf{Evaluators Set}   & \textbf{HumanEval} & \textbf{AlpacaEval} & \textbf{MATH}  & \textbf{GSM8k} & \textbf{GAIA}  & \textbf{Avg}   \\ \midrule
Original                  & 70.73              & 32.44               & 75.58          & 88.18          & 13.33          & 56.05          \\ 
Diverse                   & 70.98              & 32.87               & 75.91          & 88.75          & 13.46          & 56.39          \\ 
Higher Capability         & {74.73}     & {47.92}      & {75.58} & {90.04} & {16.21} & {60.89} \\ \bottomrule
\end{tabular}
}
\label{table:evaluator_selection}
\end{table*}

\begin{algorithm}[h]
\caption{ActiveGED }
\label{alg:activeged}
\begin{algorithmic}[1]
\Require Candidate set $V = \{v_1, v_2, \dots, v_n\}$; Initial preference graph $G = (V, A, w)$; Evaluator set $E = \{\text{ev}_1, \dots, \text{ev}_k\}$; Total budget $B$; Random budget ratio $\alpha \in (0, 1)$
\Ensure Updated preference graph $G^* = (V, A^*, w^*)$
\State Randomly select $m_{\text{rand}} = \alpha B$ edges from $(V \times V) \setminus A$ to form $A_{\text{rand}}$
\For{each edge $(u, v) \in A_{\text{rand}}$}
    \For{each evaluator $\text{ev}_i \in E$}
        \State Obtain preference weight $w_i(u, v)$
    \EndFor
    \State Aggregate weights $w(u, v) = \frac{1}{k} \sum_{i=1}^k w_i(u, v)$
\EndFor
\State Update $A^* \leftarrow A \cup A_{\text{rand}}$
\State Set current budget $b \leftarrow m_{\text{rand}}$
\State Compute PageRank $PR(v)$ for all $v \in V$ using $G^* = (V, A^*, w^*)$
\While{$b < B$}
    \For{each unevaluated edge $(u, v) \in (V \times V) \setminus A^*$}
        \State Estimate uncertainty $U(u, v)$ based on current PageRank scores
    \EndFor
    \State Select edge $(u^*, v^*)$ with highest $U(u, v)$
    \For{each evaluator $\text{ev}_i \in E$}
        \State Obtain preference weight $w_i(u^*, v^*)$
    \EndFor
    \State Aggregate weights $w(u^*, v^*) = \frac{1}{k} \sum_{i=1}^k w_i(u^*, v^*)$
    \State Update $A^* \leftarrow A^* \cup \{(u^*, v^*)\}$
    \State Increment $b \leftarrow b + 1$
    \State Recompute PageRank $PR(v)$ for all $v \in V$ using the updated $G^*$
\EndWhile
\State \textbf{return} $G^* = (V, A^*, w^*)$
\end{algorithmic}
\end{algorithm}

\section{Cost Considerations and Active Learning with ActiveGED}
\label{sec:activeged}

Using multiple evaluators and aggregating their preferences can be computationally expensive, especially when constructing dense preference graphs. This limitation becomes more pronounced in scenarios where preferences across all pairs of evaluators are required, which scales quadratically with the number of responses or models being compared. To address this, we clarify the computational trade-offs and propose an active learning-based approach to reduce the number of required pairwise evaluations.
To further reduce the number of pairwise evaluations needed, we developed an active learning algorithm called \textbf{ActiveGED}. This algorithm strategically selects the most informative pairs for evaluation, effectively lowering the overall computational cost while maintaining high performance.  ActiveGED combines random sampling with uncertainty-based selection to maximize information gain from each additional pairwise evaluation.
We evaluated ActiveGED under budget constraints of 30\% and 50\% of the total possible pairwise comparisons, where we set the random budget ratio as 0.5. The results, presented in Table~\ref{table:activeged_results}, demonstrate that ActiveGED achieves competitive performance with significantly fewer evaluations. 
For example, under a 50\% budget, ActiveGED retains most of the performance benefits of full \OURS while cutting the number of pairwise comparisons in half.

\begin{table*}[t]
\caption{Performance comparison of ActiveGED under different budget constraints. ActiveGED achieves competitive performance with significantly fewer pairwise evaluations.}
\centering
\begin{tabular}{lcccccc}
\toprule
\textbf{Evaluators Set of \OURS} & \textbf{HumanEval} & \textbf{AlpacaEval} & \textbf{MATH} & \textbf{GSM8k} & \textbf{GAIA} & \textbf{Avg.} \\
\midrule
Random                & 57.93 & 29.58 & 72.75 & 84.67 & 11.52 & 51.29 \\
ActiveGED (30\%)       & 67.28 & 30.96 & 74.65 & 85.73 & 11.39 & 54.00 \\
ActiveGED (50\%)       & 68.62 & 31.91 & 74.87 & 87.06 & 12.08 & 54.91 \\
\OURS (Full Budget)                   & \textbf{70.73} & \textbf{32.44} & \textbf{75.58} & \textbf{88.18} & \textbf{13.33} & \textbf{56.05} \\
\bottomrule
\end{tabular}
\label{table:activeged_results}
\end{table*}

The algorithm behind ActiveGED is outlined in Algorithm~\ref{alg:activeged}. It begins by randomly selecting a portion of the budget to initialize the preference graph and then iteratively selects the most informative edges based on uncertainty, as estimated using PageRank scores. This approach balances exploration (random sampling) and exploitation (uncertainty-based selection) to efficiently construct an accurate preference graph.

ActiveGED demonstrates that by carefully selecting the most informative pairs, it is possible to achieve competitive performance with significantly fewer evaluations. This makes \OURS more scalable and practical for real-world applications where computational resources are limited.

\section{Mitigating Evaluator Biases in \OURS}
\label{sec:bias_mitigation}

Evaluator biases, such as position bias and token bias, can significantly impact the accuracy and fairness of the preference graphs used in \OURS. Addressing these biases is crucial for ensuring reliable evaluations and robust performance. In this section, we describe the strategies employed in our framework to mitigate these biases and discuss potential areas for future improvement.
Position bias arises when evaluators exhibit a preference for a particular position in a pairwise comparison, such as consistently favoring the first or second option regardless of content. To counter this, we explicitly include both orderings of each question and its candidate answers in the response selection setting. Specifically, for a question \( Q \) with two candidate answers \( A_1 \) and \( A_2 \), we evaluate both configurations:

\begin{itemize}
    \item \textbf{Order 1:}
    \begin{verbatim}
    Question: [Question Text]
    
    Answer 1: [Answer Text A1]
    
    Answer 2: [Answer Text A2]
    
    Which one is better?
    \end{verbatim}
    
    \item \textbf{Order 2:}
    \begin{verbatim}
    Question: [Question Text]
    
    Answer 1: [Answer Text A2]
    
    Answer 2: [Answer Text A1]
    
    Which one is better?
    \end{verbatim}
\end{itemize}

By testing both (Q, A1, A2) and (Q, A2, A1), we ensure that any positional preferences of the evaluators are balanced out when constructing the preference graph. This approach minimizes the impact of position bias on the overall rankings. Token bias occurs when an evaluator exhibits a latent preference for a specific option or label (e.g., consistently favoring "Option A" over "Option B"). Our framework implicitly mitigates token bias through the aforementioned position-swapping strategy, which prevents evaluators from associating a fixed label with a specific position. By averaging the results across both orderings, any systematic preference for a particular label is effectively neutralized.

\section{Rank ensemble method}
\label{sec:rank_ensemble_method}

\paragraph{Weight score~\citep{adler2002review}}:  
    The Weight Score method assigns a score to each vertex \( v \) based on its position in each ranking \( \mathcal{R}_i \). For a vertex \( v \) in ranking \( \mathcal{R}_i \), the score is given by:

\begin{equation}
    S_i(v) = l_i - r_i(v) + 1
\end{equation}

    where \( l_i \) is the length of ranking \( \mathcal{R}_i \) and \( r_i(v) \) is the rank of vertex \( v \) in \( \mathcal{R}_i \). If \( v \) is not present in \( \mathcal{R}_i \), \( S_i(v) = 0 \). The total score for each vertex across all rankings is:

\begin{equation}
    T(v) = \sum_{i=1}^{k} S_i(v)
\end{equation}

    The final consensus ranking \( \mathcal{R}^* \) is obtained by sorting the vertices \( v \) in descending order of \( T(v) \).

 \paragraph{Kemeny and weighted Kemeny~\citep{kemeny1959mathematics}}:  
    The Kemeny method seeks a consensus ranking \( \mathcal{R}^* \) that minimizes the total pairwise disagreements between \( \mathcal{R}^* \) and the input rankings, measured using the Kendall $\tau$-distance:

\begin{equation}
    \mathcal{R}^* = \arg\min_{\mathcal{R}} \sum_{i=1}^{k} \tau(\mathcal{R}, \mathcal{R}_i)
\end{equation}

    The Weighted Kemeny method introduces a weight \( \alpha_i \) for each ranking \( \mathcal{R}_i \), reflecting its importance or reliability:

\begin{equation}
    \mathcal{R}^* = \arg\min_{\mathcal{R}} \sum_{i=1}^{k} \alpha_i \cdot \tau(\mathcal{R}, \mathcal{R}_i)
\end{equation}

    Here, the goal is to minimize the weighted Kendall tau distance, emphasizing rankings with higher weights.

\paragraph{Pairwise majority and weighted pairwise majority}~\citep{caragiannis2016noisy}:  
The Pairwise Majority (PM) method determines a consensus ranking \( \mathcal{R}^* \) by maximizing the number of pairwise agreements with the input rankings. For each pair of vertices \( (v_i, v_j) \), the goal is to ensure that the majority of rankings agree with their relative order in \( \mathcal{R}^* \):

    \begin{equation}
    \scalebox{0.7}{$
    \displaystyle
    \mathcal{R}^* = \arg\max_{\mathcal{R}} \sum_{i < j}
    \left( \sum_{p=1}^{k} \mathds{1}(\mathcal{R}_p(v_i) < \mathcal{R}_p(v_j)) \right)
    \cdot \mathds{1}(\mathcal{R}(v_i) < \mathcal{R}(v_j))
    $}
    \end{equation}

    The Weighted Pairwise Majority method incorporates weights \( \alpha_p \) to account for the reliability of each ranking \( \mathcal{R}_p \):

    \begin{equation}
    \scalebox{0.7}{$
    \displaystyle
    \mathcal{R}^* = \arg\max_{\mathcal{R}} \sum_{i < j}
    \left( \sum_{p=1}^{k} \alpha_p \cdot \mathds{1}(\mathcal{R}_p(v_i) < \mathcal{R}_p(v_j)) \right)
    \cdot \mathds{1}(\mathcal{R}(v_i) < \mathcal{R}(v_j))
    $}
    \end{equation}

    In both methods, the objective is to maximize the (weighted) pairwise agreement between the consensus ranking and the input rankings.

\section{Cost-Performance Analysis of \OURS{}}
\label{sec:cost_detail}
We conducted a detailed cost-performance analysis based on publicly available API pricing from Amazon Bedrock.
API Cost Comparison: For our \OURS{} method, we utilize three relatively lightweight evaluators: Llama3-8B, Mistral-7B, and Qwen2-7B. Since pricing information for Qwen2 is not currently available on Amazon Bedrock, we conservatively adopt the pricing of Llama3 as a proxy. Specifically, the cost per 1 million input tokens is \$0.09, and the cost per 1 million output tokens is \$0.175.
In contrast, using a single strong evaluator such as Qwen2-72B is significantly more expensive, with costs of \$0.23 per 1 million input tokens and \$0.40 per 1 million output tokens. This comparison shows that our \OURS{} method can achieve superior evaluation performance at only approximately 39.13\% of the input cost and 43.75\% of the output cost.
Performance vs. Budget Trade-off: As demonstrated in Appendix~\ref{sec:activeged}, \OURS{} is capable of outperforming strong evaluators like Qwen2-72B even when utilizing only 30–50\% of the evaluation budget. This efficiency arises from effectively aggregating and denoising multiple weaker signals.
The computational overhead involved in graph fusion and denoising steps within \OURS{} is minimal compared to the inference time of large language models, making \OURS{} a cost-effective and practical solution for scalable preference evaluations.

\begin{table*}[t]
\caption{Performance comparison of \OURS with multi-evaluator baselines PRD and ChatEval. \OURS achieves the highest score on each dataset and the best overall average.}
\centering
\begin{tabular}{lcccccc}
\toprule
\textbf{Method} & \textbf{HumanEval} & \textbf{AlpacaEval} & \textbf{MATH} & \textbf{GSM8k} & \textbf{GAIA} & \textbf{Avg.} \\
\midrule
PRD      & 66.53 & 30.33 & 74.89 & 86.75 & 11.83 & 54.07 \\
ChatEval & 66.42 & 29.71 & 74.62 & 86.66 & 11.53 & 53.79 \\
\OURS      & \textbf{70.73} & \textbf{32.44} & \textbf{75.58} & \textbf{88.18} & \textbf{13.33} & \textbf{56.05} \\
\bottomrule
\end{tabular}
\label{table:ged_vs_prd_chateval}
\end{table*}
\section{Comparison with More Baselines}

We extend our experimental analysis to include two recent multi-evaluator baselines: {PRD}~\cite{Li2023PRDPR} and  {ChatEval}~\cite{chan2023chateval}. Both methods are designed to improve evaluation reliability by aggregating feedback from multiple evaluators. PRD focuses on pairwise ranking distillation, while ChatEval employs evaluator voting mechanisms. For a fair comparison, we evaluate all methods using the same set of models: Llama3-8B, Mistral-7B, and Qwen2-7B.
Table~\ref{table:ged_vs_prd_chateval} summarizes the results across five standard benchmark datasets. \OURS consistently outperforms both PRD and ChatEval across all tasks.
These results highlight the effectiveness of \OURS’s structural approach. Unlike debate-style or voting-based methods, \OURS constructs and denoises preference graphs to enforce global consistency, resulting in both improved empirical performance and stronger theoretical guarantees.

\section{\OURS{} \  with Strong Evaluators}

To examine whether \OURS{} can benefit from stronger evaluators, we conducted additional experiments using three high-capacity language models: GPT-4-o-mini, GPT-4-o, and Gemini 2.0. These evaluators were used to construct individual and ensemble preference graphs under the \OURS{} framework.
The results, presented in Figure~\ref{fig:strong_evaluator_effect}, demonstrate clear improvements of \OURS{} over each individual evaluator across multiple tasks. This confirms that \OURS{} not only performs well with lightweight models but also enhances the evaluation quality when applied to strong LLMs such as GPT-4 and Gemini.
These findings support the generality and scalability of our framework. By aggregating and denoising even high-quality but noisy judgments, \OURS{} consistently improves robustness and overall evaluation accuracy.

\begin{figure*}[h]
\centering
\includegraphics[width=1.0\linewidth]{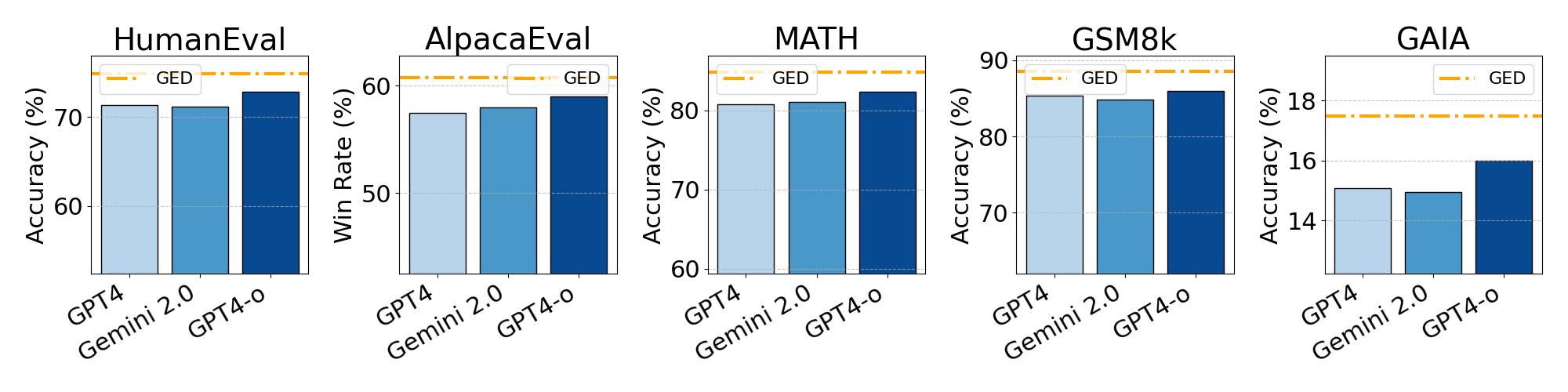}
\caption{Performance of \OURS{} using strong evaluators: GPT-4-o-mini, GPT-4-o, and Gemini 2.0. \OURS{} ensemble improves upon each individual model across tasks.
}
\label{fig:strong_evaluator_effect}
\end{figure*}

\section{ Comparison with Weighted Voting Baselines}

We further evaluate \OURS{} against two intuitive multi-evaluator aggregation baselines: {Weighted Majority Voting} and {Weighted Average of Ratings}. Both baselines utilize the same set of evaluators (Llama3-8B, Mistral-7B, and Qwen2-7B) and apply weight-based schemes to either majority votes or numerical scores.
Weighted Majority Voting assigns each evaluator a weight and determines the preferred candidate based on weighted voting outcomes across all pairwise comparisons. Weighted Average of Ratings aggregates normalized point-wise scores by evaluator weights to rank responses.
As shown in Figure~\ref{fig:weighted_baselines_comparison}, \OURS{} consistently outperforms both baselines across all evaluated tasks. This demonstrates that leveraging the structure of preference graphs and enforcing global consistency through denoising leads to more accurate and robust evaluations compared to simple weighted heuristics.

\begin{figure*}[h]
\centering
\includegraphics[width=1.0\linewidth]{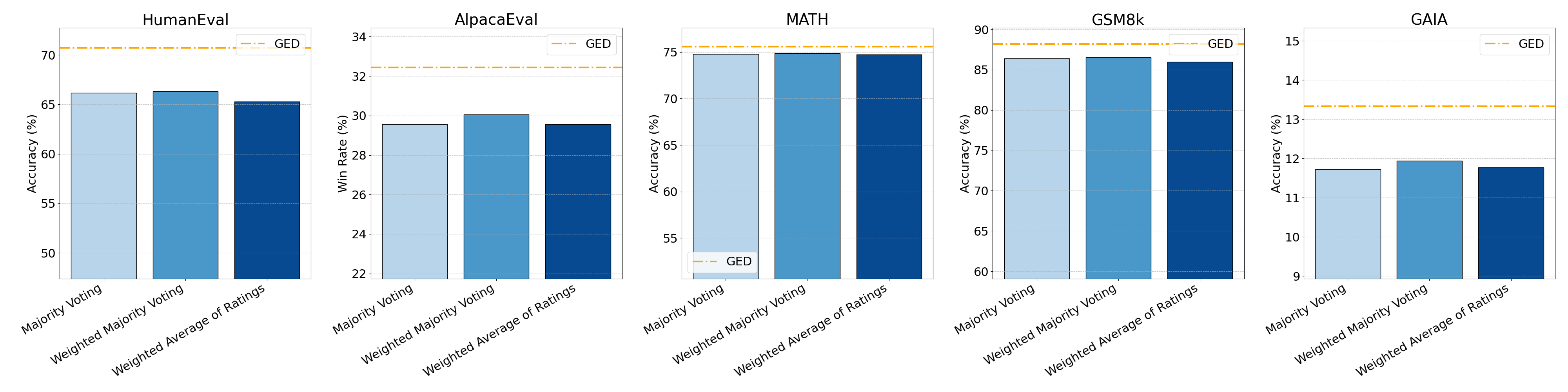}
\caption{Performance comparison of GED with Weighted Majority Voting and Weighted Average of Ratings. GED outperforms both across tasks.}
\label{fig:weighted_baselines_comparison}
\end{figure*}

\section{Prompt Template}
\label{sec:prompt-template}

\paragraph{Prompt for Response Selection.}
In this section, we provide detailed prompt templates used for response selection across five datasets: HumanEval~\citep{chen2021evaluating}, AlpacaEval~\citep{dubois2024alpacafarm}, MATH~\citep{hendrycks2021measuring}, GSM8k~\citep{chen2021evaluating}, and GAIA~\citep{mialon2023gaia}. These prompts include both a system prompt to establish the evaluator's context and a user prompt tailored to the specific task requirements. Each prompt is designed to guide the evaluators in comparing two candidate responses based on task-specific criteria such as correctness, clarity, efficiency, relevance, and completeness. The templates are shown in Table~\ref{table:PromptHumanEval}, Table~\ref{table:PromptAlpacaEval}, Table~\ref{table:PromptMATH}, Table~\ref{table:PromptGSM8k}, and Table~\ref{table:PromptGAIA}.

\paragraph{Prompt for Model Ranking.}
This section presents the prompt template used in the Model Ranking task. The template is designed to evaluate and compare responses generated by different models for a given instruction, based on criteria such as accuracy, clarity, completeness, and helpfulness. The evaluation process involves analyzing two candidate responses and identifying which one better fulfills the requirements of the instruction. The detailed prompt for the Model Ranking  is provided in Table~\ref{table:PromptModelRanking}.

\paragraph{Prompt for Instruct Tuning.}

In this section, we provide the prompt template used for data selection in the Instruct Tuning task. The goal is to select the most appropriate response for each instruction from multiple candidates, ensuring that the selected responses are helpful, harmless, and relevant. We use the HH-RLHF dataset~\citep{bai2022training}, which contains human preference data aimed at training language models to be both helpful and harmless. The detailed prompt used by evaluators to assess and compare candidate responses is presented in Table~\ref{table:PromptInstructTuning}.

\clearpage

\begin{table*}[h]
\renewcommand{\arraystretch}{1.5}
\caption{Prompt template for evaluating programming solutions on the HumanEval dataset.}
\centering
\begin{tabular}{m{0.85\linewidth}} %
\begin{tcolorbox}[mybox, title=Prompt for HumanEval]
{\large\textbf{System Prompt:}}

You are an expert programmer and code reviewer. Your task is to evaluate code solutions for programming problems. Assess each solution based on its correctness, efficiency, readability, and adherence to best coding practices.

\vspace{3mm}

{\large \textbf{User Prompt:}}

Please compare the following two code solutions to the given programming problem. For each solution, evaluate whether it produces correct outputs for all edge cases, whether it is efficient in terms of time and space complexity, and whether the code is clean, well-documented, and follows best practices. Identify any errors or areas for improvement.

\textbf{Programming Problem:}
[Problem Description]

\textbf{Solution A:}
[Candidate Solution A]

\textbf{Solution B:}
[Candidate Solution B]

\textbf{Question:}
Which solution is better and why? Provide a detailed comparison focusing on correctness, efficiency, readability, and coding standards.

\end{tcolorbox}
\end{tabular}
\label{table:PromptHumanEval}
\end{table*}

\begin{table*}[t]
\renewcommand{\arraystretch}{1.5}
\caption{Prompt template for assessing instruction-following responses on the AlpacaEval dataset.}
\centering
\begin{tabular}{m{0.85\linewidth}} %
\begin{tcolorbox}[mybox, title=Prompt for AlpacaEval]
{\large\textbf{System Prompt:}}

You are an AI assistant trained to assess and compare responses to user instructions. Your evaluations should be based on accuracy, clarity, completeness, and helpfulness.

\vspace{3mm}

{\large \textbf{User Prompt:}}

Please compare the following two responses to the given instruction. Analyze each response for how well it follows the instruction, the accuracy of the information provided, the clarity of the explanation, and the overall helpfulness to the user. Point out any errors, omissions, or areas where the response could be improved.

\textbf{Instruction:}
[Instruction Text]

\textbf{Response A:}
[Candidate Response A]

\textbf{Response B:}
[Candidate Response B]

\textbf{Question:}
Which response better addresses the instruction and why? Provide a detailed comparison focusing on the criteria mentioned above.

\end{tcolorbox}
\end{tabular}
\label{table:PromptAlpacaEval}
\end{table*}

\begin{table*}[t]
\renewcommand{\arraystretch}{1.5}
\caption{Prompt template for evaluating mathematical solutions on the MATH dataset.}
\centering
\begin{tabular}{m{0.85\linewidth}} %
\begin{tcolorbox}[mybox, title=Prompt for MATH]
{\large\textbf{System Prompt:}}

You are a mathematician and educator skilled at evaluating mathematical solutions. Assess the correctness, completeness, and clarity of the following solutions to the math problem. Pay attention to the logical reasoning steps, the mathematical accuracy, and the clarity of explanations.

\vspace{3mm}

{\large \textbf{User Prompt:}}

Please evaluate the following two solutions to the given math problem. For each solution, analyze whether the reasoning is correct, if all necessary steps are included, and if the explanations are clear and easy to understand. Identify any errors or misconceptions.

\textbf{Math Problem:}
[Problem Description]

\textbf{Solution A:}
[Candidate Solution A]

\textbf{Solution B:}
[Candidate Solution B]

\textbf{Question:}
Which solution is better and why? Provide a detailed comparison focusing on correctness, completeness, and clarity.

\end{tcolorbox}
\end{tabular}
\label{table:PromptMATH}
\end{table*}

\begin{table*}[t]
\renewcommand{\arraystretch}{1.5}
\caption{Prompt template for assessing multi-step reasoning answers on the GSM8k dataset.}
\centering
\begin{tabular}{m{0.85\linewidth}} %
\begin{tcolorbox}[mybox, title=Prompt for GSM8k]
{\large\textbf{System Prompt:}}

You are a teacher specializing in elementary mathematics. Evaluate student answers to math word problems for correctness and quality of reasoning. Consider whether the student has correctly understood the problem, applied appropriate mathematical operations, and provided clear explanations for each step.

\vspace{3mm}

{\large \textbf{User Prompt:}}

Please compare the following two answers to the given math word problem. For each answer, assess the accuracy of the solution, the appropriateness of the reasoning steps, and the clarity of the explanations. Highlight any mistakes or areas for improvement.

\textbf{Math Word Problem:}
[Problem Description]

\textbf{Answer A:}
[Candidate Answer A]

\textbf{Answer B:}
[Candidate Answer B]

\textbf{Question:}
Which answer is more accurate and better explained, and why? Provide a detailed comparison focusing on the criteria mentioned above.

\end{tcolorbox}
\end{tabular}
\label{table:PromptGSM8k}
\end{table*}

\begin{table*}[t]
\renewcommand{\arraystretch}{1.5}
\caption{Prompt template for evaluating complex question answers on the GAIA dataset.}
\centering
\begin{tabular}{m{0.85\linewidth}} %
\begin{tcolorbox}[mybox, title=Prompt for GAIA]
{\large\textbf{System Prompt:}}

You are an expert in complex problem-solving and knowledge retrieval. Assess the following answers for accuracy, relevance, depth, and comprehensiveness in response to the complex question. Consider whether the answers provide correct information, cover all aspects of the question, and are well-articulated.

\vspace{3mm}

{\large \textbf{User Prompt:}}

Please evaluate the following two answers to the given question. For each answer, analyze the correctness of the information provided, the relevance to the question asked, the depth of the explanation, and the overall quality of the response. Note any inaccuracies, omissions, or areas where the answer could be improved.

\textbf{Question:}
[Complex Question]

\textbf{Answer A:}
[Candidate Answer A]

\textbf{Answer B:}
[Candidate Answer B]

\textbf{Question:}
Which answer provides a better response to the question and why? Provide a detailed comparison focusing on the criteria mentioned above.

\end{tcolorbox}
\end{tabular}
\label{table:PromptGAIA}
\end{table*}

% \clearpage

\begin{table*}[t]
\renewcommand{\arraystretch}{1.5}
\caption{Prompt template in the Model Ranking.}
\centering
\begin{tabular}{m{0.85\linewidth}} %
\begin{tcolorbox}[mybox, title=Prompt for Model Ranking]
{\large\textbf{System Prompt:}}

You are an AI assistant trained to assess and compare responses to user instructions. Your evaluations should be based on accuracy, clarity, completeness, and helpfulness.

\vspace{3mm}

{\large \textbf{User Prompt:}}

Please compare the following two responses to the given instruction. Analyze each response for how well it follows the instruction, the accuracy of the information provided, the clarity of the explanation, and the overall helpfulness to the user. Point out any errors, omissions, or areas where the response could be improved.

\textbf{Instruction:}
[Instruction Text]

\textbf{Response A:}
[Candidate Response A]

\textbf{Response B:}
[Candidate Response B]

\textbf{Question:}
Which response better addresses the instruction and why? Provide a detailed comparison focusing on the criteria mentioned above.

\end{tcolorbox}
\end{tabular}
\label{table:PromptModelRanking}
\end{table*}

\begin{table*}[t]
\renewcommand{\arraystretch}{1.5}
\caption{Prompt template for evaluating responses in the Instruct Tuning.}
\centering
\begin{tabular}{m{0.85\linewidth}} %
\begin{tcolorbox}[mybox, title=Prompt for Instruct Tuning]
{\large\textbf{System Prompt:}}

You are a highly skilled AI assistant trained to evaluate and compare responses to user instructions. Your evaluations should focus on helpfulness, harmlessness, and relevance.

\vspace{3mm}

{\large \textbf{User Prompt:}}

Please compare the following two responses to the given instruction. For each response, assess the following aspects:

\textbf{Helpfulness:} Does the response effectively address the instruction and provide useful, accurate information?

\textbf{Harmlessness:} Does the response avoid any harmful, offensive, or inappropriate content?

\textbf{Relevance:} Is the response directly related to the instruction without unnecessary or irrelevant information?

Provide your analysis for each aspect, noting any issues or areas for improvement.

\textbf{Instruction:}
[Instruction Text]

\textbf{Response A:}
[Candidate Response A]

\textbf{Response B:}
[Candidate Response B]

\textbf{Question:}
Which response better satisfies the criteria above and why? Provide a detailed explanation supporting your choice, focusing on helpfulness, harmlessness, and relevance.

\end{tcolorbox}
\end{tabular}
\label{table:PromptInstructTuning}
\end{table*}

\end{document}